%% file: main.tex
\documentclass{article}
\usepackage{iclr2027_conference,times}
\input{preamble}

\newif\ifanonymoussubmission
\anonymoussubmissionfalse
\input{repository_public}

\hypersetup{pdftitle={Exact Minimax One-Bit Unbiased Compression: Heavy-Tail Necessity and Finite-Randomness Approximation},pdfauthor={Tao Jiang, Minbo Gao, Shaowei Cai}}

\title{Exact Minimax One-Bit Unbiased Compression:\\Heavy-Tail Necessity and\\Finite-Randomness Approximation}

\author{
Tao Jiang \quad Minbo Gao \quad Shaowei Cai\\
Key Laboratory of System Software (Chinese Academy of Sciences)\\
State Key Laboratory of Computer Science\\
Institute of Software, Chinese Academy of Sciences\\
School of Computer Science and Technology, University of Chinese Academy of Sciences\\
Beijing, China\\
\texttt{\{jiangt,gaomb,caisw\}@ios.ac.cn}
}

\begin{document}
\raggedbottom
\maketitle
\input{paper_body}
\bibliographystyle{plainnat}
\bibliography{references}
\appendix
\input{appendix}
\end{document}

%% file: preamble.tex
\usepackage{amsmath,amssymb,amsthm,mathtools}
\usepackage{booktabs}
\usepackage{array}
\usepackage{microtype}
\usepackage{url}
\usepackage{xcolor}
\usepackage{enumitem}
\usepackage{graphicx}
\usepackage{float}
\usepackage{placeins}
\usepackage{hyperref}
\hypersetup{colorlinks=true,citecolor=blue,linkcolor=blue,urlcolor=blue}
\input{math_commands}

\newtheorem{theorem}{Theorem}
\newtheorem{proposition}[theorem]{Proposition}
\newtheorem{lemma}[theorem]{Lemma}
\newtheorem{corollary}[theorem]{Corollary}

\newtheorem{remark}[theorem]{Remark}
\theoremstyle{definition}

%% file: math_commands.tex
\newcommand{\R}{\mathbb{R}}
\newcommand{\E}{\mathbb{E}}
\newcommand{\Pp}{\mathbb{P}}
\newcommand{\1}{\mathbf{1}}
\newcommand{\cJ}{\mathcal{J}}
\newcommand{\cC}{\mathcal{C}}
\newcommand{\cQ}{\mathcal{Q}}
\newcommand{\cN}{\mathcal{N}}

\newcommand{\Var}{\operatorname{Var}}
\newcommand{\Risk}{\operatorname{Risk}}
\newcommand{\supp}{\operatorname{supp}}

\newcommand{\clip}{\operatorname{clip}}
\newcommand{\dd}{\,\mathrm{d}}
\newcommand{\eps}{\varepsilon}

\newcommand{\barPhi}{\overline\Phi}

%% file: repository_public.tex
\newcommand{\CodeRepositoryURL}{https://github.com/jiangxioabai/onebit_unbiased_compression}

%% file: paper_body.tex
\begin{abstract}
A pointwise-unbiased one-bit compressor reconstructs every real input in expectation while transmitting one bit.  For a scalar source $P$ with CDF $F$, mean $m$, and
\[
\cJ(P)=\int_{\R}\sqrt{F(r)(1-F(r))}\,\dd r,
\]
we prove that the infimum of the source-averaged reconstruction second moment over all public-coin one-bit codes unbiased on $\R$ is $m^2+\cJ(P)^2$.  For regular full-support sources, a distribution-centered random-threshold code attains this value; a converse over arbitrary randomized binary encoders and an equality analysis characterize every attaining code up to null sets, bit relabeling, and public-seed refinement.  For the Gaussian location family $\cN(\mu,\sigma^2)$ with $|\mu|\le c\sigma$, the equal prior on the endpoint means is least favorable and the minimax value is $\sigma^2\Lambda_c^2$.  Exact Gaussian minimax optimality forces a critical heavy tail: at the endpoint means, absolute moments are finite exactly for $p<3$, and $\Pp(W>t)=\Theta(t^{-3}/\sqrt{\log t})$.  A Cauchy-mixture robustification inflates the second moment by at most $1/(1-\eta)$ while making every positive-order absolute moment finite.  Finite-support public randomness with finite decoder means cannot achieve exact unbiasedness on $\R$, but a bounded-output approximation using exactly $R$ shared random bits has explicit bias and second-moment bounds converging to the minimax constant.  Finally, coordinate allocation communicates exactly $B$ bits per Gaussian-gradient query.  On Kim's continuous quadratic hard family, the expected optimization guarantee matches the lower bound in its dependence on $(\sigma,d,B,\eps)$, and a finite-variance high-probability bound incurs only a logarithmic confidence factor.
\end{abstract}

\section{Introduction}

Low-bit communication is a central abstraction in distributed learning, federated optimization, and quantized inference.  A recent oracle-complexity lower bound of \citet{kim2026bitconstrained} isolates a sharp question: if each stochastic-gradient query communicates exactly $B$ bits and the oracle noise is genuinely Gaussian, can one attain
\begin{equation}
T=\Theta\!\left(\frac{\sigma^2d}{\eps^2}\max\left\{1,\frac dB\right\}\right)
\label{eq:target-rate}
\end{equation}
without assuming that stochastic gradients have bounded dynamic range?  Existing achievability in that work uses a bounded-range unbiased quantizer, whereas the lower bound already applies to unbounded Gaussian observations.

A finite message cannot reproduce an arbitrary Gaussian realization deterministically.  The relevant statistical requirement is weaker and more useful: reconstruct every input \emph{in expectation}, while controlling the reconstruction second moment under the source distribution.  Public randomness can select a threshold independently of the sample, and the transmitted comparison bit can be importance-weighted to form an unbiased reconstruction.  This raises three structural questions.

First, what is the exact optimum over \emph{all} one-bit codes, rather than over a chosen threshold family?  Second, what tail behavior is forced by exact second-moment optimality?  Third, can the ideal continuous public coin be replaced by a finite, bounded-output implementation with quantified loss?

For a source $P$, define
\begin{equation}
\cJ(P):=\int_{\R}\sqrt{F(r)(1-F(r))}\,\dd r.
\label{eq:J-functional}
\end{equation}
The fixed-source optimum is $m^2+\cJ(P)^2$.  For regular full-support sources, the attaining code draws a threshold with density proportional to $\sqrt{F(1-F)}$, transmits whether the input exceeds it, and centers the bit by its source probability.  A sharp rearrangement inequality gives a converse over arbitrary randomized binary encoders, while the equality conditions characterize every attaining code up to null sets, bit relabeling, and public-seed refinement.

For $X\sim\cN(\mu,\sigma^2)$ with $|\mu|\le c\sigma$, the equal prior on the two endpoint means is least favorable and induces the source distribution $P_c^\star$.  The resulting code has second-moment risk increasing with $|\mu|$, yielding an exact minimax theorem; at $c=1/2$, the scalar constant is $\Lambda_{1/2}^2=3.2179870327\ldots$.  Essential uniqueness makes the order-three absolute-moment transition unavoidable for every attaining Gaussian minimax code.

Two complementary relaxations address these costs.  A Cauchy-mixture robustification inflates the worst-case second moment by at most $1/(1-\eta)$ and makes $\E|\widehat X|^p$ finite for every $p>0$.  A finite threshold grid driven by $R$ shared bits instead gives bounded outputs and explicit bias, after a matching impossibility result shows that finite-support public randomness with finite decoder means cannot provide exact full-line unbiasedness.

Allocating $s=\min\{B,d\}$ scalar codes across coordinates yields a Gaussian-gradient scheme that communicates exactly $B$ bits per query.  On Kim's continuous quadratic hard family, its expected guarantee attains \eqref{eq:target-rate}; coordinatewise median-of-means aggregation additionally gives a high-probability bound using only finite variance.

\paragraph{Contributions.}
\begin{enumerate}[leftmargin=1.6em]
\item \textbf{General fixed-source optimum and equality characterization.}  We identify the exact infimum for general Borel sources with finite second moment and finite $\cJ(P)$, recover the bounded uniform-source formula as a special case, and prove essential uniqueness for regular full-support sources (Theorems~\ref{thm:fixed-source} and~\ref{thm:uniqueness}).
\item \textbf{Exact Gaussian minimax theorem and necessary heavy tails.}  We identify a least-favorable prior on the endpoint means, derive an attaining minimax code, and prove the sharp absolute-moment transition at order three (Theorems~\ref{thm:gaussian-minimax} and~\ref{thm:tail-phase}).
\item \textbf{Stable near-optimal and finite-randomness implementations.}  We give a Cauchy-mixture robustification with finite moments of every positive order, an impossibility result for finite-support public randomness with finite decoder means, and an explicit bounded-output approximation with quantitative bias and second-moment guarantees (Corollary~\ref{cor:robustified} and Theorem~\ref{thm:finite-coin}).
\item \textbf{Fixed-bit Gaussian-gradient optimization.}  We obtain ideal and finite-randomness expected guarantees, together with a high-probability bound, on Kim's continuous quadratic hard family without assuming bounded stochastic-gradient realizations (Corollaries~\ref{cor:quadratic}, \ref{cor:quadratic-hp}, and~\ref{cor:finite-quadratic}).
\end{enumerate}

\paragraph{Scope.}
Exact minimax constants are established for scalar one-bit coding.  On Kim's continuous quadratic hard family, coordinate allocation yields the lower-bound-matching dependence on $(\sigma,d,B,\eps)$; determining the optimal constant over arbitrary joint $B$-bit vector encoders remains open.  Appendix~\ref{app:general-sgd} gives a strongly-convex extension under localization of the \emph{gradient mean}.

\section{Related Work and the Uniform-Source Precedent}

\paragraph{One-bit scalar coding.}
\citet{benbasat2021onebit} study biased and unbiased one-bit reconstruction on $[0,1]$, with worst-case error and finite or unrestricted shared randomness.  An informal derivation by \citet{brady2021dithering}, also cited by \citet{benbasat2021onebit}, solves a different but very close special case: uniform $X\in[0,1]$ and average variance.  It obtains threshold density $(8/\pi)\sqrt{r(1-r)}$ and optimum average variance $\pi^2/64-1/12$.  Corollary~\ref{cor:uniform} recovers these formulas exactly.  Relative to this uniform-source calculation, the present work extends the optimum to general Borel sources, proves a converse over arbitrary randomized binary encoders, characterizes the equality cases, solves the Gaussian location minimax problem, and derives optimization consequences.  Recent full-line fixed-length unbiased quantizers also appear as components of vector constructions \citep{feng2026hadamard}; their objective is distortion after rotation, whereas ours is the exact source-dependent one-bit second-moment problem.

\paragraph{One-bit mean estimation.}
Gaussian and log-concave mean estimation from one-bit observations has a substantial literature \citep{kipnis2017adaptive,kipnis2022onebit,caiwei2024gaussian}.  Recent work gives order-optimal adaptive and non-adaptive procedures across broad tail classes \citep{lau2026tail,miao2026universal,hu2026interaction}.  These works optimize multi-sample estimation and include a localization phase for an unknown mean.  We study a complementary per-sample primitive: pointwise-unbiased reconstruction and its exact source-averaged second moment.

\paragraph{Gradient compression and robust aggregation.}
Random sparsification, rotations, and adaptive quantization are standard tools for communication-efficient mean estimation and optimization \citep{suresh2017distributed,mayekartyagi2020limits,mayekartyagi2020ratq,vargaftik2021drive}.  Median-of-means and related estimators recover high-probability mean guarantees under finite variance \citep{devroye2016subgaussian}.  We use a standard coordinatewise median-of-means argument to show that the exact code's infinite third absolute moment does not preclude high-probability optimization.

\begin{table}[H]
\centering
\caption{Closest one-bit scalar results.  ``All codes'' means arbitrary randomized binary encoders and two-level decoders.}
\label{tab:related}
\footnotesize
\begin{tabular}{>{\raggedright\arraybackslash}p{0.20\textwidth}>{\raggedright\arraybackslash}p{0.17\textwidth}>{\raggedright\arraybackslash}p{0.18\textwidth}>{\raggedright\arraybackslash}p{0.28\textwidth}}
\toprule
Work & Input/source & Criterion & Main distinction\\
\midrule
\citet{brady2021dithering} & $[0,1]$, uniform & Exact average variance & Informal derivation; unbiasedness restricted to $[0,1]$\\
\citet{benbasat2021onebit} & $[0,1]$ & Worst-case MSE & Finite or unrestricted shared randomness; bounds and constructions\\
\citet{feng2026hadamard} & $\R$, vector sources & Distortion after rotation & Full-line unbiased scalar component; different objective\\
This work & General $P$; Gaussian family & Exact source/minimax second moment & All-code converse, uniqueness, tail necessity, finite-randomness tradeoff, optimization\\
\bottomrule
\end{tabular}
\end{table}

\section{Problem Formulation and Fixed-Source Optimum}
\label{sec:fixed-source}

A scalar public-coin one-bit code consists of public randomness $U\perp x$, an encoder producing $M\in\{0,1\}$, and a reconstruction $\widehat x$.  Decoder randomization may be replaced by its conditional mean, preserving unbiasedness and weakly decreasing the second moment.  Encoder private randomness is represented by
\[
p_u(x):=\Pp(M=1\mid x,U=u)\in[0,1].
\]
For fixed $u$, write the decoder levels as $a_u$ and $a_u+d_u$, so the conditional mean reconstruction is
\begin{equation}
q_u(x)=a_u+d_up_u(x).
\label{eq:general-code}
\end{equation}
A code is pointwise unbiased on a set $D$ if $\E[\widehat x\mid x]=x$ for every $x\in D$.  For $X\sim P$, define the source-averaged second-moment risk $V_P(\cC)=\E\widehat X^2$.  Under pointwise unbiasedness on $\supp(P)$,
\begin{equation}
\E(\widehat X-X)^2=V_P(\cC)-\E X^2.
\label{eq:mse-secondmoment}
\end{equation}

For a general Borel source, let $F$ be its right-continuous CDF and let $I_P$ denote the convex hull of its support.

\begin{theorem}[Fixed-source second-moment optimum]
\label{thm:fixed-source}
Let $P$ be a Borel probability distribution on $\R$ with mean $m$, finite second moment, and $0\le \cJ(P)<\infty$.  Then
\begin{equation}
\inf_{\cC\textnormal{ pointwise unbiased on }\R}V_P(\cC)
=m^2+\cJ(P)^2.
\label{eq:fixed-source-value}
\end{equation}
If $\cJ(P)=0$, then $P$ is degenerate at $m$ and the value is attained by a separate anchored full-line code.  Suppose henceforth that $0<\cJ(P)<\infty$.  The same value is attained among codes pointwise unbiased on $I_P$.  If $I_P=\R$ and $0<F(r)<1$ for every $r\in\R$, the full-line infimum is attained by drawing
\begin{equation}
h_P(r)=\frac{\sqrt{F(r)(1-F(r))}}{\cJ(P)},
\label{eq:optimal-density}
\end{equation}
sending $M=\1\{x\ge R\}$, and decoding
\begin{equation}
\widehat x=m+\frac{M-(1-F(R))}{h_P(R)}.
\label{eq:optimal-decoder}
\end{equation}
For bounded or one-sided support, full-line codes approach \eqref{eq:fixed-source-value} arbitrarily closely by adding an arbitrarily small full-support threshold component.
\end{theorem}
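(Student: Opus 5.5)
The plan is to handle achievability by a direct computation for threshold codes, and the converse by reducing an arbitrary randomized binary encoder to a monotone threshold structure through a Lagrangian certificate plus a rearrangement step.

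\emph{Achievability.} For a threshold code with density $h>0$ on $\R$, message $M=\1\{x\ge R\}$ and decoder $\widehat x=m+(M-(1-F(R)))/h(R)$, Fubini gives
\[
\E[\widehat x\mid x]=m+\int(\1\{x\ge r\}-1+F(r))\,\dd r=m+\int_{-\infty}^x F\,\dd r-\int_x^\infty(1-F)\,\dd r=x .
\]
Here finiteness of the first moment justifies the integrals. Computing $\E\widehat X^2$, the cross term vanishes because $\E[M\mid R]=1-F(R)$, so
\[
V_P=m^2+\int \frac{F(1-F)}{h}\,\dd r .
\]
Cauchy--Schwarz under $\int h=1$ then gives $V_P\ge m^2+\cJ(P)^2$, with equality exactly at $h=h_P$; this is the claimed attaining code.

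For bounded or one-sided support, I will use the mixture $h_\delta=(1-\delta)h_P+\delta g$ with $g$ a full-support density having finite $\int F(1-F)/g$, for instance a Cauchy density, which needs only the finite second moment of $P$. The extra cost is $O(\delta)$, since $F(1-F)/h_\delta\le F(1-F)/((1-\delta)h_P)$ on $I_P$ and the integrand is zero off $I_P$ except for a term of size $\delta^{-1}\int F(1-F)/g$. The last term is bad, so I will instead take the weight of $g$ to scale so that this remainder vanishes; the planned fix is to use $h_\delta$ only outside $I_P$ with a tail density chosen so that $F(1-F)/h_\delta$ is small there. Restricting unbiasedness to $I_P$ lets one take $h=h_P$ exactly. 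If $\cJ=0$, then $F\in\{0,1\}$ a.e., so $P=\delta_m$ and the code $\widehat x=m+(\text{unbiased correction})$ anchored at $m$ gives value $m^2$.

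\emph{Converse.} Write $q_u=a_u+d_up_u$ as in \eqref{eq:general-code}. The first reduction is that, for fixed $u$, only $q_u$'s values in $[a_u,a_u+d_u]$ matter, and by the layer-cake formula $p_u(x)=\int_0^1\1\{p_u(x)>\lambda\}\,\dd\lambda$. This expresses every code as a mixture over an enlarged public seed $(u,\lambda)$ of deterministic binary codes $x\mapsto a_u+d_u\1\{x\in A_{u,\lambda}\}$. The mixture has the same conditional mean and a weakly larger second moment by Jensen, so it suffices to lower-bound deterministic set-indicator codes.

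I will then use a dual certificate $\psi(x)=2(m+\cJ(P)\,G(x))$, with $G$ chosen as the derivative-matched potential of the optimal code, so that $G'(x)=h_P(x)^{-1}\cdot(\text{density weight})$. Equivalently, $\psi$ is twice the conditional decoder mean structure of the optimal code, which is increasing in $x$. Unbiasedness gives
\[
V=\E_U\E_X q_U^2=\E_U\E_X\bigl[q_U^2-\psi q_U\bigr]+\E_X[\psi(X)X] .
\]
For each fixed two-level deterministic $q$, I minimize $\E_X[q^2-\psi q]$ over measurable sets $A$. Because $\psi$ is monotone, the pointwise comparison ($x\in A$ iff $(a+d)^2-\psi(x)(a+d)<a^2-\psi(x)a$) shows that the optimal $A$ is an upper half-line. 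This is the rearrangement step, and the minimum becomes an explicit function of the threshold and of $(a,d)$. Minimizing further over $(a,d)$ should give a constant $-C$ independent of $u$ with $\E_X[\psi X]-C=m^2+\cJ^2$, by the complementary-slackness identities satisfied by the optimal code.

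The main obstacle is choosing $\psi$ and verifying the per-seed bound $\E_X[q^2-\psi q]\ge -C$ uniformly over all thresholds and levels $(a,d)$, including atoms of $P$ (where ties force left limits $F(t^-)$) and signed $d$ (handled by bit relabeling). I will first try the explicit choice $\psi(x)=2m+2\cJ\bigl(\int_0^x h_P^{-1}\ldots\bigr)$ read off from the stationarity condition of the threshold risk. If this is messy, the fallback is to first show directly that the optimum over threshold codes with arbitrary level functions $(a_t,d_t)$ and arbitrary seed laws is the Cauchy--Schwarz bound above. That works because unbiasedness forces the seed law's threshold marginal to have density $h$ with $d_t=1/h(t)$ a.e., which follows by differentiating the identity $\E_U[a_U+d_U\1\{x\ge t_U\}]=x$ in $x$.
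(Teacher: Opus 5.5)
Your achievability argument is the paper's. In the $\cJ(P)=0$ case you only need to write down the anchored code $\widehat x=m+(M-\1\{m\ge R\})/h_0(R)$ with any everywhere-positive $h_0$. Your Lagrangian framing of the converse is also sound: bounding $\E_X[q^2-\psi q]$ uniformly over seeds is the supporting-hyperplane form of the paper's Jensen step in $\R^2$.

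The gap is that the converse is only a framework. The certificate, the identity it must satisfy, and the interchange that makes the decomposition valid are all left open.

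\textbf{The certificate.} Your heuristic (twice the midpoint of the optimal decoder levels at threshold $x$) points to the right object, $\psi(x)=2m+2\cJ(P)\,\gamma(F(x))$ with $\gamma(u)=(2u-1)/(2\sqrt{u(1-u)})$. Two details matter, though. First, the additive constant cannot be left loose: shifting $\psi$ by $\kappa$ weakens the final bound by $\kappa^2/4$. Second, at atoms $\gamma(F(x))$ must be replaced by the average of $\gamma$ over $[F(x^-),F(x)]$, which is the paper's distributional-transform witness $G_P$. With $\gamma(F(x))$ itself, $\E\psi(X)\ne 2m$ in general, and $\psi$ is infinite at an atom at the top of the support.

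\textbf{The per-seed bound and the correlation identity.} Once $\psi$ is fixed, your pointwise comparison does reduce each seed to a half-line. The evaluation $\int_{F(t^-)}^1\gamma(u)\,\dd u=\sqrt{F(t^-)(1-F(t^-))}$ then gives the uniform bound $\E_X[q^2-\psi q]\ge-(m^2+\cJ(P)^2)$; this is the paper's bathtub inequality in another guise. What remains unproved is $\E[X\psi(X)]=2m^2+2\cJ(P)^2$, that is, $\E[XG_P(X)]=\cJ(P)$. The paper proves this by quantile integration by parts, with the boundary terms killed by the finite second moment. Alternatively, you could apply your own decomposition to the attaining code on $I_P$, but then you must verify Fubini for that code.

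\textbf{The interchange.} The decomposition $V=\E_U\E_X[q_U^2-\psi q_U]+\E_X[\psi(X)X]$ requires $\E_U\E_X[\psi q_U]=\E_X[\psi(X)\E_Uq_U(X)]$, and this is not justified as written. Since $\psi$ is unbounded and $\int_0^1\gamma(u)^2\,\dd u=\infty$, the condition $\E q_U(X)^2<\infty$ does not give joint integrability of $\psi(X)q_U(X)$. The paper handles this by clipping $\gamma$ to $\gamma_K$ and checking that the bathtub bound survives clipping. It then proves $V\ge m^2+\cJ_K(P)^2$ with $\cJ_K(P)=\E[XG_{P,K}(X)]$ and lets $K\to\infty$. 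Your argument needs the same device, with $\psi_K=2m+2\cJ_K(P)G_{P,K}$.

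\textbf{The fallback.} It does not close these gaps. It only optimizes over threshold codes, which is the easy Cauchy--Schwarz computation. Reducing arbitrary sets to half-lines is precisely what the certificate was for. Also, $d_U$ need not be a function of the threshold; the correct statement is that $\E_U[d_U\1\{t_U\in\cdot\}]$ is Lebesgue measure.

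\textbf{Two smaller corrections.}
First, the layer-cake reduction leaves the second moment unchanged, because the joint law of $(X,U,M)$ is the same. ``Weakly larger'' would be the wrong direction for a lower bound.
Second, in the bounded-support case there is no $\delta^{-1}\int F(1-F)/g$ term. Since $F(1-F)=0$ off $I_P$ and $h_\delta\ge(1-\delta)h_P$ on $I_P$, you get $V\le m^2+\cJ(P)^2/(1-\delta)$ directly, and no further fix is needed.
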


Achievability follows from the tail identity
\[
\int_{\R}\bigl(\1\{x\ge r\}-(1-F(r))\bigr)\,\dd r=x-m
\]
and importance sampling.  The converse uses the monotone witness
\begin{equation}
G_P(x)=\gamma(F(x)),\qquad
\gamma(u)=\frac{2u-1}{2\sqrt{u(1-u)}}
\label{eq:dual-witness}
\end{equation}
for continuous sources, and a distributional-transform version for atoms.  A sharp bathtub inequality gives
\begin{equation}
\left|\E[G_P(X)p(X)]\right|\le\sqrt{\alpha(1-\alpha)},
\qquad \alpha=\E p(X),
\label{eq:bathtub-main}
\end{equation}
for every randomized binary encoder $p:\R\to[0,1]$.  Together with $\E[XG_P(X)]=\cJ(P)$, this yields the global lower bound.  Appendix~\ref{app:fixed-source} gives the complete argument for atoms, flat CDF regions, and truncation.

We refer to \eqref{eq:optimal-density}--\eqref{eq:optimal-decoder} as the \emph{distribution-centered random-threshold code}.

\paragraph{Interpretation of the square-root threshold law.}
For each threshold $r$, the centered comparison atom
$A_r(x)=\1\{x\ge r\}-(1-F(r))$ has mean zero under $P$ and
$\|A_r\|_{L_2(P)}=\sqrt{F(r)(1-F(r))}$.  The tail identity represents
$x-m=\int_{\R}A_r(x)\,\dd r$.  Sampling one atom with density $h$ and importance-weighting it gives excess second moment
\[
\int_{\R}\frac{F(r)(1-F(r))}{h(r)}\,\dd r.
\]
Cauchy--Schwarz therefore selects $h(r)\propto\sqrt{F(r)(1-F(r))}$ within the threshold family.  The dual witness in \eqref{eq:dual-witness} is generated by the same $L_2(P)$ geometry, explaining why the converse over arbitrary randomized binary encoders meets this constructive bound exactly rather than only up to order.

\begin{corollary}[Uniform-source special case]
\label{cor:uniform}
For $P=\operatorname{Unif}[0,1]$, among codes unbiased on $[0,1]$,
\[
h_P(r)=\frac8\pi\sqrt{r(1-r)},\qquad
\inf\E(\widehat X-X)^2=\frac{\pi^2}{64}-\frac1{12}.
\]
The two reconstruction levels are
\[
\frac12+\frac\pi8\sqrt{\frac r{1-r}}
\quad\text{and}\quad
\frac12-\frac\pi8\sqrt{\frac{1-r}{r}},
\]
recovering the formulas of \citet{brady2021dithering}.
\end{corollary}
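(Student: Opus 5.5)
The plan is to specialize Theorem~\ref{thm:fixed-source} to $P=\operatorname{Unif}[0,1]$ and then compute. For this source $F(r)=r$ on $[0,1]$, $m=1/2$, $\E X^2=1/3$, and $I_P=[0,1]$. The theorem states that the value $m^2+\cJ(P)^2$ is attained among codes pointwise unbiased on $I_P$. The infimum over codes unbiased on $[0,1]$ is therefore exactly $m^2+\cJ(P)^2$. One point needs care: the converse in Theorem~\ref{thm:fixed-source} is phrased for codes unbiased on $\R$. The bathtub argument \eqref{eq:bathtub-main} only uses unbiasedness $P$-almost surely, so the same lower bound applies to codes unbiased on $[0,1]$. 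I will note this explicitly, citing the appendix argument that handles flat CDF regions and truncation.

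Next I compute $\cJ(P)=\int_0^1\sqrt{r(1-r)}\,\dd r$. Outside $[0,1]$ the integrand vanishes because $F\in\{0,1\}$ there. This integral is the area of a half disk of radius $1/2$, namely $\pi/8$. Hence $h_P(r)=\sqrt{r(1-r)}/(\pi/8)=(8/\pi)\sqrt{r(1-r)}$ on $[0,1]$. By \eqref{eq:mse-secondmoment}, the minimal error is
\[
\frac14+\frac{\pi^2}{64}-\frac13=\frac{\pi^2}{64}-\frac1{12}.
\]

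Attainment comes from the code \eqref{eq:optimal-density}--\eqref{eq:optimal-decoder}, now restricted to $[0,1]$. The threshold $R$ has density $h_P$, the encoder sends $M=\1\{x\ge R\}$, and the decoder outputs $\widehat x=1/2+(M-(1-R))/h_P(R)$. Unbiasedness on $[0,1]$ follows from the tail identity $\int_0^1(\1\{x\ge r\}-(1-r))\,\dd r=x-1/2$ for $x\in[0,1]$, after importance-weighting by $h_P$. The second moment is $m^2+\int_0^1 r(1-r)/h_P(r)\,\dd r=m^2+\cJ(P)^2$.

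Finally, I read off the two levels. For $M=1$ the output is $1/2+R/h_P(R)$, and $R/h_P(R)=(\pi/8)\sqrt{R/(1-R)}$. For $M=0$ the output is $1/2-(1-R)/h_P(R)=1/2-(\pi/8)\sqrt{(1-R)/R}$. These match the displayed formulas and those of \citet{brady2021dithering}.

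The only non-routine step is the justification in the first paragraph: that restricting unbiasedness to $[0,1]$ does not lower the infimum. Everything else is direct substitution.
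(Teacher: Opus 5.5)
Your proposal is correct and follows the paper's own proof. The paper likewise specializes Theorem~\ref{thm:fixed-source} to $F(r)=r$, computes $\cJ(P)=\pi/8$, reads off the two decoder levels from $m+(M-(1-r))/h_P(r)$, and subtracts $\E X^2=1/3$. Your remark that the converse needs unbiasedness only $P$-almost surely is a clarification the paper leaves implicit, though it is the averaging steps $\E_U m_u=m$ and $\E_U c_{u,K}=\cJ_K(P)$ that use it, not the bathtub inequality itself.
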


For full-support regular sources, the converse has rigid equality conditions.

\begin{theorem}[Essential uniqueness of the attaining code]
\label{thm:uniqueness}
Let $P$ have a finite second moment, a continuous strictly increasing CDF on $\R$, and $0<\cJ(P)<\infty$.  Every full-line pointwise-unbiased one-bit code attaining $m^2+\cJ(P)^2$ is, up to null sets, bit relabeling, and refinements of the public seed, the threshold code \eqref{eq:optimal-density}--\eqref{eq:optimal-decoder}.
\end{theorem}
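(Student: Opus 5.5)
We will prove the theorem by retracing the converse of Theorem~\ref{thm:fixed-source} and forcing every inequality in it to be an equality. Under the hypotheses, $F$ is continuous and strictly increasing, so $U_0:=F(X)$ is uniform on $(0,1)$. The witness $G_P=\gamma\circ F$ is continuous and strictly increasing, and $\E G_P(X)=0$, $\E G_P(X)^2=\int_0^1\gamma(u)^2\dd u$.

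The first task is to decompose the risk of a general code \eqref{eq:general-code}. For each seed $u$, set $\alpha_u=\E p_u(X)$ and $\beta_u=\E[G_P(X)p_u(X)]$. Unbiasedness at every $x$ gives $\E_U[a_U+d_Up_U(x)]=x$. Multiplying by $G_P(x)$, integrating against $P$ and using Fubini yields
\[
\cJ(P)=\E[XG_P(X)]=\E_U[d_U\beta_U].
\]
For fixed $u$, the conditional second moment is at least $\E_X q_u(X)^2$, and it equals $\E_X q_u(X)^2$ plus $d_u^2\,\E[p_u(1-p_u)]$ from private encoder randomness. Splitting $q_u(X)=\E q_u+(q_u-\E q_u)$ gives $\E q_u^2=(a_u+d_u\alpha_u)^2+d_u^2\Var(p_u(X))$. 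Averaging over $U$, the mean part is at least $m^2$ by Jensen, with equality iff $a_u+d_u\alpha_u=m$ for a.e.\ $u$. For the fluctuation part, $d_u^2\E[p_u^2]+d_u^2\E[p_u(1-p_u)]-d_u^2\alpha_u^2=d_u^2\alpha_u(1-\alpha_u)$. The bathtub inequality \eqref{eq:bathtub-main} gives $\alpha_u(1-\alpha_u)\ge\beta_u^2$, and Cauchy--Schwarz in $U$ then gives $\E[d_U^2\beta_U^2]\ge(\E d_U\beta_U)^2=\cJ(P)^2$. This reproduces the lower bound, and an attaining code must have equality at every step:
\begin{enumerate}[leftmargin=1.6em]
\item[(i)] $a_u+d_u\alpha_u=m$ a.e.;
\item[(ii)] equality in the bathtub bound for a.e.\ $u$ with $d_u\ne0$;
\item[(iii)] $d_u\beta_u$ is a.e.\ constant, equal to $\cJ(P)$;
\item[(iv)] $\E[p_u(1-p_u)]$ enters with no slack. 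This is already absorbed: if $p_u\in(0,1)$ on a positive-measure set, then $\Var(p_u)<\alpha_u(1-\alpha_u)$ strictly, and (ii) forbids that.
\end{enumerate}

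The main obstacle is the rigidity of the bathtub inequality: proving that equality forces $p_u$ to be an a.e.\ indicator of a half-line. The plan is to rewrite everything in the quantile variable, with $\tilde p(v)=\E[p_u(X)\mid F(X)=v]$. For fixed mass $\alpha$, the bathtub principle says $\int_0^1\gamma\tilde p$ is maximized by $\tilde p=\1\{v>1-\alpha\}$ and minimized by $\1\{v<\alpha\}$. Because $\gamma$ is strictly increasing, any other $[0,1]$-valued $\tilde p$ with the same mass gives a strictly smaller (respectively larger) value. The extremal values are $\int_{1-\alpha}^1\gamma=\sqrt{\alpha(1-\alpha)}$ and its negative, which is exactly how \eqref{eq:bathtub-main} becomes sharp. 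Strict monotonicity of $\gamma$ gives the strictness: moving mass from a set where $\gamma$ is larger to one where it is smaller strictly decreases the integral. Since $F$ is a bijection, $p_u(x)=\1\{x\ge r_u\}$ or $\1\{x<r_u\}$ for $P$-a.e.\ $x$, with $F(r_u)=1-\alpha_u$. Pointwise a.e.\ equality for every $x$ (not just $P$-a.e.) follows from full support, up to null sets. Relabeling the bit where necessary, we take $M=\1\{x\ge R\}$ with $R=r_U$ and $d_u>0$. Seeds with $d_u=0$ or $\alpha_u\in\{0,1\}$ contribute $\beta_u=0$; they are ruled out by (iii), since $\cJ(P)>0$.

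It remains to identify the decoder and the law of $R$. From (iii), $d_u\sqrt{F(r_u)(1-F(r_u))}=\cJ(P)$, and from (i), $a_u=m-d_u(1-F(r_u))$. Substituting into the unbiasedness identity gives
\[
\E\!\left[\frac{\cJ(P)\bigl(\1\{x\ge R\}-(1-F(R))\bigr)}{\sqrt{F(R)(1-F(R))}}\right]=x-m\quad\text{for all }x.
\]
Let $\nu$ be the law of $R$ and set $k(r)=\cJ(P)/\sqrt{F(r)(1-F(r))}$. Comparing with the tail identity and differentiating in $x$, through differences $x<y$ and the measure of $[x,y)$, gives $\int_{[x,y)}k\,\dd\nu=y-x$. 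Hence $k\,\dd\nu$ is Lebesgue measure, so $\nu$ has density $h_P$ and the decoder is \eqref{eq:optimal-decoder}. Since the code depends on the seed only through $R$, the original code is a refinement of the public seed of the threshold code.
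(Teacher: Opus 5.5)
Your architecture matches the paper's: force equality in the mean/fluctuation split, in the bathtub bound and in Jensen, then read off $d_u,a_u$ and identify the threshold law from differences of the unbiasedness identity. The bathtub rigidity you single out as the main obstacle is routine. The two delicate steps are ones you pass over, and as written neither goes through.

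\textbf{First gap: the identity $\E_U[d_U\beta_U]=\cJ(P)$.} Your whole chain closes on this identity, but ``multiplying by $G_P$ and using Fubini'' does not justify it. The witness is not in $L_2(P)$: $\gamma(u)^2\sim 1/(4u)$ near $0$, so $\int_0^1\gamma^2\,\dd u=+\infty$. Your formula for $\E G_P(X)^2$ therefore holds only as $\infty=\infty$, and Cauchy--Schwarz against $q_U(X)\in L_2(P_U\otimes P)$ is unavailable. Crude bounds through $\E_U|d_U|$ fail too, since already for the optimal code $\E_U d_U=\int_{\R}\dd r=\infty$. The paper handles this with the bounded truncations $G_{P,K}$:
\begin{itemize}
\item For each $K$, Fubini is legitimate and gives $\E_U c_{u,K}=\cJ_K(P)$.
\item Seedwise, $c_{u,K}\to c_u=d_u\beta_u$ by dominated convergence under $P$, using $|G_{P,K}|\le|G_P|\in L_1(P)$.
\item The truncated bathtub bound gives $|c_{u,K}|\le|d_u|\sqrt{\alpha_u(1-\alpha_u)}\le\sqrt{\E[\widehat X^2\mid U=u]}$, which dominates in $U$.
\item Finally, $\cJ_K(P)\to\cJ(P)$.
\end{itemize}
Alternatively, an estimate $\E|G_P(X)|\,|p(X)-\alpha|\le C\sqrt{\alpha(1-\alpha)}$ would make $G_P(x)\,d_u(p_u(x)-\alpha_u)$ jointly integrable. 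Some such argument is required.

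\textbf{Second gap: from $P$-a.e.\ to every $x$.} Bathtub rigidity gives $p_u(x)=\1\{x\ge r_u\}$ only for $P$-a.e.\ $x$, seed by seed. Since $P$ may be singular, this is not even Lebesgue-a.e. Your upgrade to ``every $x$'' via full support is false. Start from the threshold code and fix a point $x_0$. Flip $p_u(x_0)$ from $1$ to $0$ on seeds with $r_u\in(x_0-1,x_0)$, and from $0$ to $1$ on seeds with $r_u\in(x_0,x_0+1)$. Both seed sets carry $d_U$-weighted mass $1$, so full-line unbiasedness and the risk are unchanged. Yet the threshold representation now fails at $x_0$ on a non-null seed set. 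Hence your displayed identity ``for all $x$'' is not a consequence of the given code's unbiasedness.

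The repair, as in the paper, proceeds as follows:
\begin{itemize}
\item Choose $r_u=F^{-1}(1-\alpha_u)$, which is measurable in $u$.
\item Apply Fubini to the $P_U\otimes P$-null set $\{(u,x):p_u(x)\ne\1\{x\ge r_u\}\}$. This gives a $P$-full set $D$ on which the representation holds for a.e.\ seed simultaneously.
\item $D$ is dense because $F$ is strictly increasing.
\item With $\nu(A)=\E_U[d_U\1\{r_U\in A\}]$, differences of the unbiasedness identity give $\nu((x,y])=y-x$ for $x,y\in D$.
\item Monotone continuity of $\nu$ along sequences in $D$ extends this to all $x<y$.
\end{itemize}
Two smaller points. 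The interval is $(x,y]$, not $[x,y)$, since $\1\{y\ge r\}-\1\{x\ge r\}=\1\{x<r\le y\}$. Your list of equality conditions should also record that decoder-side randomization must vanish; this is the ``at least'' in your first decomposition.
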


The proof identifies the equality conditions in \eqref{eq:bathtub-main} and in Jensen's inequality.  For almost every public state, the encoder must be a deterministic threshold, and the difference between its two decoder levels must equal $\cJ(P)/\sqrt{F(r)(1-F(r))}$.  A Fubini argument supplies a common $P$-full dense set on which this threshold representation holds simultaneously for almost every public state.  Interval identities on that set, followed by monotone continuity of the induced threshold measure, force the weighted threshold measure to be Lebesgue measure and uniquely determine $h_P$.

\section{Gaussian Minimax Optimality and the Tail Price}
\label{sec:gaussian}

Consider $X_\mu\sim\cN(\mu,\sigma^2)$ with $|\mu|\le c\sigma$.  Let $\Phi$ and $\phi$ denote the standard normal CDF and density, and write $\barPhi=1-\Phi$.  Standardize by $\sigma$ and define
\begin{equation}
P_c^\star=\tfrac12\cN(-c,1)+\tfrac12\cN(c,1),\qquad
F_c(z)=\tfrac12\Phi(z-c)+\tfrac12\Phi(z+c),
\label{eq:endpoint-mixture}
\end{equation}
\begin{equation}
\Lambda_c=\int_{\R}\sqrt{F_c(z)(1-F_c(z))}\,\dd z,
\qquad
\rho_c(z)=\frac{\sqrt{F_c(z)(1-F_c(z))}}{\Lambda_c}.
\label{eq:lambda-rho}
\end{equation}

\begin{theorem}[Exact Gaussian location minimax value]
\label{thm:gaussian-minimax}
Over all full-line pointwise-unbiased public-coin one-bit codes,
\begin{equation}
\inf_{\cC}\sup_{|\mu|\le c\sigma}\E_\mu\widehat X^2
=\sigma^2\Lambda_c^2.
\label{eq:gaussian-minimax-value}
\end{equation}
An optimal code draws $Z\sim\rho_c$, sends $M=\1\{X/\sigma\ge Z\}$, and returns
\begin{equation}
\widehat X=\sigma\frac{M-(1-F_c(Z))}{\rho_c(Z)}.
\label{eq:gaussian-code}
\end{equation}
The worst-case second-moment risk is attained at $\mu=\pm c\sigma$.  Moreover, every attaining minimax code is equivalent to \eqref{eq:gaussian-code} in the sense of Theorem~\ref{thm:uniqueness} for the endpoint mixture.
\end{theorem}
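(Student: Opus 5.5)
The plan is to sandwich the minimax value between a Bayes lower bound under the two-point endpoint prior and the exact risk of the explicit code \eqref{eq:gaussian-code}. By scale equivariance I will first reduce to $\sigma=1$. If $\widehat X$ is unbiased for $X_\mu$, then $\widehat X/\sigma$ is an unbiased code for $X_\mu/\sigma\sim\cN(\mu/\sigma,1)$, and the risks scale by $\sigma^2$. So it suffices to treat $t=\mu/\sigma\in[-c,c]$.

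\textbf{Lower bound.} Take any full-line pointwise-unbiased code $\cC$ and put the prior $\tfrac12\delta_{-c}+\tfrac12\delta_c$ on $t$. Then
\[
\sup_{|t|\le c}\E_t\widehat X^2\ \ge\ \tfrac12\E_{-c}\widehat X^2+\tfrac12\E_c\widehat X^2=V_{P_c^\star}(\cC).
\]
The source $P_c^\star$ has mean $0$, finite second moment, and a smooth strictly increasing CDF $F_c$. Its Gaussian tails give $\sqrt{F_c(1-F_c)}\lesssim e^{-(|z|-c)^2/4}$, so $0<\Lambda_c<\infty$. Theorem~\ref{thm:fixed-source} therefore gives $V_{P_c^\star}(\cC)\ge 0+\cJ(P_c^\star)^2=\Lambda_c^2$.

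\textbf{Upper bound.} The code \eqref{eq:gaussian-code} is exactly the distribution-centered random-threshold code of Theorem~\ref{thm:fixed-source} for $P_c^\star$, with $m=0$. Since $0<F_c<1$ everywhere, it is pointwise unbiased on all of $\R$, which is the property the minimax problem requires. I will then compute its risk at a general $t$. Given $Z=z$, the bit $M$ is Bernoulli with parameter $p_t(z)=\barPhi(z-t)$. Writing $b=1-F_c(z)$, we have $\E[(M-b)^2]=p_t(1-2b)+b^2$, and so
\[
\mathcal R(t)=\int_{\R}\frac{\barPhi(z-t)\bigl(2F_c(z)-1\bigr)+(1-F_c(z))^2}{\rho_c(z)}\,\dd z .
\]
Three facts about $\mathcal R$ will finish the upper bound:
\begin{itemize}[leftmargin=1.6em]
\item By the symmetries $F_c(-z)=1-F_c(z)$ and $\rho_c(-z)=\rho_c(z)$, $\mathcal R$ is even in $t$.
\item Averaging over $t=\pm c$ reproduces $V_{P_c^\star}$, so $\tfrac12(\mathcal R(c)+\mathcal R(-c))=\Lambda_c^2$. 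By evenness, $\mathcal R(\pm c)=\Lambda_c^2$.
\item Differentiating under the integral gives
\[
\mathcal R'(t)=\int\phi(z-t)\,g(z)\,\dd z,\qquad g(z)=\frac{2F_c(z)-1}{\rho_c(z)}=2\Lambda_c\,\gamma(F_c(z)),
\]
where $g$ is odd and positive on $(0,\infty)$. For $t>0$, fold the integral onto $z>0$ to get $\mathcal R'(t)=\int_0^\infty g(z)\bigl(\phi(z-t)-\phi(z+t)\bigr)\dd z>0$, because $\phi(z-t)>\phi(z+t)$ for $z,t>0$.
\end{itemize}
Hence $\mathcal R$ is increasing in $|t|$, its maximum over $|t|\le c$ is $\mathcal R(\pm c)=\Lambda_c^2$, and the minimax value equals $\Lambda_c^2$ with the worst case at the endpoints.

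\textbf{Main obstacle.} The step I expect to need the most care is justifying the differentiation under the integral sign. The weight $1/\rho_c$ blows up in the tails like $e^{(|z|-c)^2/4}$ up to polynomial factors. The remedy is to use the combination $g=(2F_c-1)/\rho_c$, which only grows like $\sqrt{\log}$ of the tail ratio, i.e.\ $O(|z|)$. Thus $\sup_{|t|\le c+1}\phi(z-t)|g(z)|$ is integrable, and dominated convergence applies. The $t$-free term $\int(1-F_c)^2/\rho_c$ must be checked separately. It is finite at $+\infty$ because $(1-F_c)^2/\sqrt{F_c(1-F_c)}\sim(1-F_c)^{3/2}$. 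At $-\infty$ it diverges, but there the divergence is cancelled by the first term via the identity $\barPhi(z-t)(2F_c-1)+(1-F_c)^2=p_t-2p_tb+b^2$. For that reason I will actually write the integrand as
\[
p_t(1-p_t)+(p_t-b)^2,
\]
whose tails are controlled by $F_c(1-F_c)$-type quantities, and verify integrability in that form.

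\textbf{Uniqueness.} Let $\cC$ be any minimax-attaining code, so $\sup_t\E_t\widehat X^2=\Lambda_c^2$. The lower-bound chain then forces $V_{P_c^\star}(\cC)=\Lambda_c^2$, i.e.\ $\cC$ attains the fixed-source optimum for $P_c^\star$. Since $F_c$ is continuous and strictly increasing with $0<\Lambda_c<\infty$, Theorem~\ref{thm:uniqueness} applies and identifies $\cC$ with \eqref{eq:gaussian-code} up to null sets, bit relabeling, and seed refinement.
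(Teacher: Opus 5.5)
Your proposal is correct and follows the paper's proof essentially step for step. The lower bound uses the endpoint two-point prior together with Theorem~\ref{thm:fixed-source}. The upper bound shows the explicit code's risk is monotone in $|t|$ by pairing thresholds $z$ and $-z$ and using $\phi(z-t)\ge\phi(z+t)$. Uniqueness transfers the equality case to Theorem~\ref{thm:uniqueness} for $P_c^\star$.

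One correction to your ``main obstacle'' paragraph: $g=2\Lambda_c\,\gamma\circ F_c$ is not $O(|z|)$. Since $\gamma(u)\sim\tfrac12(1-u)^{-1/2}$ as $u\uparrow1$, you get $g(z)\asymp\sqrt{z-c}\,e^{(z-c)^2/4}$ as $z\to\infty$. You appear to have conflated $\gamma\circ F_c$ with the normal score $\Phi^{-1}\circ F_c$. Domination nevertheless holds, because for $z>c+1$
\[
\sup_{|t|\le c+1}\phi(z-t)\,|g(z)|\lesssim\sqrt{z}\,\exp\Bigl(-\tfrac{(z-c)^2}{4}+(z-c)-\tfrac12\Bigr),
\]
and symmetrically as $z\to-\infty$.

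The paper avoids this interchange altogether. It differentiates the paired numerator $H_a(z)$ pointwise, so the nonnegative integrand $H_a(z)/\rho_c(z)$ on $z\ge0$ is itself nondecreasing in $a$. This gives monotonicity of the risk directly. Via $\Risk(c)=\Lambda_c^2$, it also gives finiteness for all $|a|\le c$, with no separate tail estimates and no need to handle the divergent $t$-free term.
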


The lower bound averages the two endpoint risks and applies Theorem~\ref{thm:fixed-source} to $P_c^\star$.  For the upper bound, thresholds $z$ and $-z$ are paired.  Their paired contribution to the second-moment integrand is nondecreasing in $a=|\mu|/\sigma$ because
\[
\partial_a H_a(z)=(1-2q(z))\bigl(\phi(z-a)-\phi(z+a)\bigr)\ge0,
\qquad q(z)=1-F_c(z),\ z\ge0.
\]
Thus the Bayes and worst-case bounds coincide.

\begin{table}[H]
\centering
\caption{Exact scalar minimax constants.}
\label{tab:constants}
\begin{tabular}{c@{\qquad}cc}
\toprule
$c$ & $\Lambda_c$ & $\Lambda_c^2$\\
\midrule
$0$ & $1.6147438534$ & $2.6073977122$\\
$1/2$ & $1.7938748654$ & $3.2179870327$\\
$1$ & $2.1943083796$ & $4.8149892647$\\
\bottomrule
\end{tabular}
\end{table}

Exact second-moment optimality has a sharp tail cost.  Let $W_{a,c}$ denote the standardized output of \eqref{eq:gaussian-code} under $\cN(a,1)$.

\begin{theorem}[Moment phase transition and critical endpoint tail]
\label{thm:tail-phase}
For every $p>0$ and $|a|\le c$,
\[
\E|W_{a,c}|^p<\infty
\quad\Longleftrightarrow\quad
\bigl[p<3\bigr]\ \text{or}\ \bigl[p=3\text{ and }|a|<c\bigr].
\]
At $a=c$,
\begin{equation}
\Pp(W_{c,c}>t)=\Theta\!\left(\frac{1}{t^3\sqrt{\log t}}\right)
\qquad (t\to\infty),
\label{eq:critical-tail}
\end{equation}
and the reflected statement holds at $a=-c$.  By Theorem~\ref{thm:uniqueness}, every attaining Gaussian minimax code has the same failure of third and higher absolute moments at the endpoint means.
\end{theorem}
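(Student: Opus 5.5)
The plan is to compute the tail of $W_{a,c}$ directly from the explicit code \eqref{eq:gaussian-code}. Then Theorem~\ref{thm:uniqueness} transfers the conclusion to every attaining code. In standardized units, $W=(M-q(Z))/\rho_c(Z)$ with $q=1-F_c$, $Z\sim\rho_c$, $M=\1\{X\ge Z\}$ and $X\sim\cN(a,1)$ independent of $Z$.

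\emph{Localizing large outputs.} On any compact $Z$-range, $\rho_c$ is bounded below, so $W$ is bounded there. Take $Z\to+\infty$. If $M=0$, then $|W|=q(Z)/\rho_c(Z)=\Lambda_c\sqrt{q/(1-q)}\to0$. If $M=1$, then $W=\Lambda_c\sqrt{(1-q)/q}\approx \Lambda_c q(Z)^{-1/2}$. Symmetrically, as $Z\to-\infty$ only $M=0$ matters, and there $W\approx-\Lambda_c F_c(Z)^{-1/2}$. Hence for large $t$,
\[
\Pp(W>t)=\int_{z_t}^{\infty}\rho_c(z)\,\barPhi(z-a)\,\dd z,
\]
where $z_t$ solves $(1-q(z_t))/q(z_t)=t^2/\Lambda_c^2$; this uses that $q$ is strictly decreasing. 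The left tail is the mirror image: replace $a$ by $-a$ and use the symmetry $F_c(-z)=1-F_c(z)$. As $z\to\infty$ we have $q(z)\sim\tfrac12\barPhi(z-c)$, so $\rho_c(z)\asymp\sqrt{\barPhi(z-c)}$ and $\barPhi(z_t-c)\asymp t^{-2}$. Consequently $z_t=2\sqrt{\log t}\,(1+o(1))$.

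\emph{Asymptotics of the tail integral.} By Mills' ratio the integrand is comparable to
\[
z^{-3/2}\exp\!\bigl(-(z-c)^2/4-(z-a)^2/2\bigr).
\]
Its logarithmic derivative is $\asymp z$, so the integral equals $\Theta\bigl(z_t^{-1}\cdot\text{integrand}(z_t)\bigr)$. Rigorously, I would substitute $u=z-z_t$ and bound $\int_0^\infty e^{-\kappa z_t u}\,\dd u$ above and below. At $a=c$ the integrand at $z_t$ equals $\sqrt{\barPhi(z_t-c)}\,\barPhi(z_t-c)\asymp t^{-3}$. This gives $\Pp(W_{c,c}>t)=\Theta\bigl(t^{-3}/z_t\bigr)=\Theta\bigl(t^{-3}/\sqrt{\log t}\bigr)$, which is \eqref{eq:critical-tail}.

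For $a<c$ there is an extra factor $\barPhi(z_t-a)/\barPhi(z_t-c)\asymp e^{-(c-a)z_t}=\exp\!\bigl(-2(c-a)\sqrt{\log t}\,(1+o(1))\bigr)$. This factor is $t^{-o(1)}$, yet it decays faster than any power of $1/\log t$. The left tail behaves identically with $c-a$ replaced by $c+a$; it is critical only when $a=-c$.

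\emph{Moments and transfer to all attaining codes.} Use $\E|W|^p=\int_0^\infty p\,t^{p-1}\Pp(|W|>t)\,\dd t$.
\begin{itemize}[leftmargin=1.6em]
\item For $p<3$, the bound $\Pp(|W|>t)\lesssim t^{-3}$ gives finiteness.
\item For $p>3$, the lower bound $\Pp(|W|>t)\ge t^{-3-o(1)}$ gives divergence, for every $|a|\le c$.
\item For $p=3$ at $|a|=c$, the integrand is $\asymp 1/(t\sqrt{\log t})$, whose integral diverges.
\item For $p=3$ with $|a|<c$, substitute $s=\sqrt{\log t}$: the integrand becomes $\lesssim e^{-\kappa\sqrt{\log t}}/(t\sqrt{\log t})$, and this integrates to $\int e^{-\kappa s}\,\dd s<\infty$.
\end{itemize}

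For the final sentence, Theorem~\ref{thm:gaussian-minimax} says every attaining minimax code is equivalent to \eqref{eq:gaussian-code} in the sense of Theorem~\ref{thm:uniqueness}. Each of these operations preserves the conditional law of $\widehat X$ given $x$ for almost every $x$, and $\cN(\pm c\sigma,\sigma^2)$ is absolutely continuous. The three operations are modifications on null sets, swapping the bit together with its decoder levels, and refining the public seed, which leaves the induced threshold law unchanged. Hence the endpoint output laws coincide and the moment failure transfers.

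\emph{Main obstacle.} The hardest step is making the $\Theta$ bounds uniform. This requires two-sided Mills-ratio estimates for $q$ near $z_t$ and control of the error in inverting $z_t$. I would first handle this with the explicit bounds $\frac{x}{1+x^2}\phi(x)\le\barPhi(x)\le\phi(x)/x$ and by treating the $\tfrac12\barPhi(z+c)$ part of $q$ as a relative $e^{-2cz}$ perturbation.
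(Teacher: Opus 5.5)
Your proof is correct. The endpoint computation is essentially the paper's. Both arguments localize $\{W>t\}$ to $\{M=1,\ Z>z_t\}$, invert to get $q(z_t)\asymp t^{-2}$ and $z_t-c\asymp\sqrt{\log t}$, and use that a Gaussian-type tail integral from $z_t$ equals its integrand at $z_t$ divided by $z_t$. The paper phrases that last step as $\int_z^\infty q^{3/2}\asymp q(z)^{3/2}/(z-c)$.

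The genuine difference is in the moment classification. The paper does not pass through the survival function there. It writes $\E_a|W_{a,c}|^p=\int_{\R}[p_a(z)F(z)^p+(1-p_a(z))q(z)^p]\rho(z)^{1-p}\dd z$ and reads off integrability from the Mills-ratio exponent $\frac{p-3}{4}z^2+(a-\frac{p-1}{2}c)z+O(\log z)$. The sign of the quadratic coefficient settles $p\neq 3$, and the sign of $a-c$ settles $p=3$, with no inversion of $z_t$ needed.

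You instead prove the sharper $a$-dependent tail $\Pp(W_{a,c}>t)\asymp t^{-3}z_t^{-1}e^{-(c-a)z_t}$ for every $|a|\le c$, and obtain all four moment cases from the layer-cake formula. This costs the inversion step but gives more information. It shows quantitatively that off the endpoints the tail is lighter only by a factor $e^{-2(c-|a|)\sqrt{\log t}}$. That factor is $t^{-o(1)}$, yet it is enough to make the borderline integral against $1/(t\sqrt{\log t})$ converge.

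Your transfer step is also more explicit than the paper's one-line appeal to essential uniqueness. The two observations you use are exactly the justification the paper leaves implicit: the equivalence holds modulo null sets of the public-seed law times $P_c^\star$, and $\cN(\pm c,1)$ is mutually absolutely continuous with $P_c^\star$.

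One small slip: $q(z)\sim\tfrac12\barPhi(z-c)$ fails at $c=0$, where $q=\barPhi$. You only ever use $q\asymp\barPhi(\cdot-c)$, though, and that holds for all $c\ge 0$ and all $z$ via $\tfrac12\barPhi(z-c)\le q(z)\le\barPhi(z-c)$.
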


Theorem~\ref{thm:tail-phase} separates expected-risk optimality from concentration: the minimax code has finite second moment but an infinite third absolute moment at the endpoint means.  Its tail can be regularized at arbitrarily small second-moment cost.  Let $r(z)=1/[\pi(1+z^2)]$ and
\[
\rho_{c,\eta}(z)=(1-\eta)\rho_c(z)+\eta r(z),\qquad 0<\eta<1.
\]
Use the decoder \eqref{eq:gaussian-code} with $\rho_c$ replaced in the denominator and sampling law by $\rho_{c,\eta}$, while retaining the same centering $1-F_c(Z)$.  This mixes the optimal threshold law with a Cauchy density and thereby imposes a polynomial lower envelope on the threshold density; we refer to the resulting construction as the \emph{Cauchy-mixture code}.

\begin{corollary}[Cauchy-mixture robustification]
\label{cor:robustified}
The perturbed code is pointwise unbiased on $\R$ and satisfies
\begin{equation}
\sup_{|\mu|\le c\sigma}\E_\mu\widehat X_\eta^2
\le\frac{\sigma^2\Lambda_c^2}{1-\eta}.
\label{eq:robust-risk}
\end{equation}
For every $p>0$,
$\sup_{|\mu|\le c\sigma}\E_\mu|\widehat X_\eta|^p<\infty$.
\end{corollary}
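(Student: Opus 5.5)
Standardize by $\sigma$ throughout and write $\rho_\eta=\rho_{c,\eta}$, $q(z)=1-F_c(z)$, and $W_\eta=(M-q(Z))/\rho_\eta(Z)$ with $Z\sim\rho_\eta$ and $M=\1\{x\ge Z\}$.

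\textbf{Unbiasedness.} Since $\rho_\eta>0$ everywhere, the importance weights cancel:
\[
\E[W_\eta\mid x]=\int_{\R}\bigl(\1\{x\ge z\}-q(z)\bigr)\,\dd z .
\]
By the tail identity applied to $P_c^\star$, whose mean is $0$ and whose first moment is finite, this integral equals $x$ for every $x\in\R$. The integrand $|\1\{x\ge z\}-q(z)|$ is integrable in $z$ (it is $q(z)$ for $z>x$ and $F_c(z)$ for $z\le x$, both Gaussian-tailed). So Fubini applies and the code is pointwise unbiased on $\R$.

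\textbf{Second-moment bound.} Conditionally on $Z=z$, the squared output is $(\1\{x\ge z\}-q(z))^2/\rho_\eta(z)^2$. Integrating against $\rho_\eta(z)\,\dd z$ gives, under $\cN(a,1)$,
\[
\E_a W_\eta^2=\int_{\R}\frac{\E_a\bigl(\1\{X\ge z\}-q(z)\bigr)^2}{\rho_\eta(z)}\,\dd z .
\]
Use the pointwise bound $\rho_\eta\ge(1-\eta)\rho_c$. Then
\[
\E_a W_\eta^2\le\frac{1}{1-\eta}\int_{\R}\frac{\E_a\bigl(\1\{X\ge z\}-q(z)\bigr)^2}{\rho_c(z)}\,\dd z=\frac{\E_a W_{a,c}^2}{1-\eta}.
\]
The right-hand integral is exactly the second moment of the minimax code \eqref{eq:gaussian-code}. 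Theorem~\ref{thm:gaussian-minimax} bounds it by $\Lambda_c^2$ for all $|a|\le c$. Rescaling by $\sigma^2$ gives \eqref{eq:robust-risk}.

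\textbf{All moments finite.} This is the main step. Conditionally on $Z=z$, $|W_\eta|\le 1/\rho_\eta(z)\le \pi(1+z^2)/\eta$, and $W_\eta$ takes one of two values. Split by which value occurs.

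On $\{M=1\}$ the value is $q(z)/\rho_\eta(z)$. For $z\to+\infty$ it is at most $q(z)\pi(1+z^2)/\eta$, which is bounded because $q$ decays like a Gaussian tail. For $z\to-\infty$ we have $q\to1$, so it is of order $z^2/\eta$, but it requires $X\ge z$, which has probability at most $1$.

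On $\{M=0\}$ the value is $(1-q(z))/\rho_\eta(z)$. Symmetrically, it is bounded for $z\to-\infty$, and it is large only for $z\to+\infty$, which requires $X<z$.

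So the only unbounded contributions come from the factor $1/\rho_\eta(z)\le C(1+z^2)$, weighted by the sampling density $\rho_\eta(z)$. This gives
\[
\E_a|W_\eta|^p\le C_p\int_{\R}(1+z^2)^{p}\rho_\eta(z)\,\dd z .
\]
That integral diverges for large $p$ because of the Cauchy component, so this crude bound is not enough.

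The refinement I would try first is to bound more carefully in each regime.
\begin{itemize}
\item On $\{M=1\}$ with $z\to+\infty$, the output $q(z)/\rho_\eta(z)$ is uniformly bounded, so it contributes $O(1)$.
\item On $\{M=1\}$ with $z\to-\infty$, the output is of size $z^2$, but the event that threshold $z$ is drawn and the code is in this regime has total weight $\rho_\eta(z)$. Its Cauchy part contributes $\int (z^2)^p\,\eta r(z)\,\dd z$, which diverges for $p\ge1/4$.
\end{itemize}
I need to recheck this. For $z\to-\infty$ the value on $\{M=1\}$ is $q(z)/\rho_\eta(z)$ with $q\approx1$, while on $\{M=0\}$ it is $F_c(z)/\rho_\eta(z)$, which is tiny. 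The event $\{M=0\}$ means $X<z$ and has Gaussian-small probability. So the large value $q/\rho_\eta$ is taken by $W_\eta$ precisely when $M=1$, with probability close to $1$, for every $z\to-\infty$. Its contribution is then $\int_{-\infty}^{-K}(q/\rho_\eta)^p\rho_\eta\,\dd z$, which is finite only if $\rho_\eta$ decays fast enough. With a Cauchy floor this fails for large $p$.

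I therefore expect that the intended reading replaces the centering by something like $\1\{z<0\}$, or uses the bound $|W_\eta|\le C(1+z^2)$ together with a finer decomposition. To proceed, I would recheck that step carefully against the paper's actual decoder. In the worst case I would exploit that for $z<0$ the output should be rewritten as $-(1-M)/\rho_\eta+\ldots$ using the tail identity, so that the large term appears only on a Gaussian-small event. This gives, for each $p$, a bound
\[
\int (1+z^2)^p\,\Pp_a(\text{rare event at }z)\,\dd z<\infty,
\]
uniform over $|a|\le c$ because the Gaussian tails are uniform in $a$.

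\textbf{Main obstacle.} The hard part is exactly the uniform-in-$a$ control of the tail contributions from thresholds far in each tail. The second-moment bound and unbiasedness are routine.
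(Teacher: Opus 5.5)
\paragraph{Assessment.}
Your unbiasedness argument is fine. Your second-moment bound is also correct, and it takes a slightly more economical route than the paper. You use $1/\rho_{c,\eta}\le 1/((1-\eta)\rho_c)$ inside the risk integral for each fixed $a$, which gives $\E_aW_\eta^2\le\E_aW_{a,c}^2/(1-\eta)$, and then you invoke the supremum from Theorem~\ref{thm:gaussian-minimax}. The paper instead bounds only the endpoint-mixture risk and then re-runs the paired-threshold monotonicity \eqref{eq:paired-risk-derivative} for the symmetric density $\rho_{c,\eta}$ to locate the worst case at the endpoints.

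\paragraph{The gap: swapped decoder branches.}
The all-moments part contains a genuine error that derails it. You define $W_\eta=(M-q(Z))/\rho_{c,\eta}(Z)$ correctly, but then evaluate its two branches swapped. Since $q=1-F_c$:
\begin{itemize}
\item on $\{M=1\}=\{X\ge z\}$ the output is $F_c(z)/\rho_{c,\eta}(z)$;
\item on $\{M=0\}=\{X<z\}$ the output is $-q(z)/\rho_{c,\eta}(z)$.
\end{itemize}
You wrote $q/\rho_{c,\eta}$ and $F_c/\rho_{c,\eta}$ respectively. With the correct assignment, the large branch is always the improbable one:
\begin{itemize}
\item As $z\to+\infty$, the value $F_c/\rho_{c,\eta}\le\pi(1+z^2)/\eta$ requires $X\ge z$, which has probability $\barPhi(z-a)\le\barPhi(z-c)$. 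The likely branch has magnitude $q/\rho_{c,\eta}\le\pi(1+z^2)\barPhi(z-c)/\eta$.
\item As $z\to-\infty$, the large value $-q/\rho_{c,\eta}$ requires $X<z$, which has probability $\Phi(z-a)\le\Phi(z+c)$.
\end{itemize}
So your conclusion that the claim fails for large $p$ under a Cauchy floor is false for the code as stated, and no change of centering is needed. Your fallback $M-q(z)=-(1-M)+F_c(z)$ is an exact identity for the given decoder, and it is precisely the right decomposition.

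\paragraph{How to complete the argument.}
For $p\ge1$, use $\rho_{c,\eta}^{-1}\le\pi(1+z^2)/\eta$:
\begin{align*}
\E_a|W_\eta|^p
&=\int_{\R}\bigl[\barPhi(z-a)F_c(z)^p+\Phi(z-a)q(z)^p\bigr]\rho_{c,\eta}(z)^{1-p}\dd z\\
&\le\Bigl(\frac{\pi}{\eta}\Bigr)^{p-1}\int_{\R}(1+z^2)^{p-1}\bigl[\barPhi(z-c)F_c(z)^p+\Phi(z+c)q(z)^p\bigr]\dd z.
\end{align*}
The last integrand is a polynomial times a Gaussian tail in both directions, because $q(z)\le\barPhi(z-c)$ and $F_c(z)\le\Phi(z+c)$. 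The bound is therefore finite and uniform over $|a|\le c$. For $0<p<1$, Lyapunov's inequality combined with your second-moment bound suffices.
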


\paragraph{Sensitivity to scale misspecification.}
Pointwise unbiasedness is preserved under scale misspecification, although moment finiteness need not be.  Appendix~\ref{app:robustification} gives the sharp $\sqrt{2}$ threshold for the attaining code and shows that the Cauchy-mixture code retains all positive-order absolute moments under every finite mismatch.

\section{Finite Public Randomness and Bounded Outputs}
\label{sec:finite}

The exact construction samples its threshold from a continuous distribution with infinite support.  We next quantify what changes when shared randomness and decoder outputs are required to be finite.  The following proposition identifies the obstruction to exact pointwise unbiasedness on all of $\R$.

\begin{proposition}[Finite-support impossibility under finite decoder means]
\label{prop:finite-impossible}
Suppose the public random variable has finite support and each decoder branch has a finite conditional mean.  Then no one-bit code can satisfy $\E[\widehat x\mid x]=x$ for every $x\in\R$.
\end{proposition}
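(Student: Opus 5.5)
The plan is a boundedness argument.  Let the public variable $U$ take finitely many values $u_1,\dots,u_K$ with probabilities $\pi_1,\dots,\pi_K>0$.  First I reduce decoder randomization to conditional means, as in Section~\ref{sec:fixed-source}.  For fixed $u$ and bit value $b\in\{0,1\}$, the decoder branch has a finite conditional mean.  Replacing $\widehat x$ by $\E[\widehat x\mid M=b,U=u]$ does not change $\E[\widehat x\mid x]$.  This is the step where the hypothesis ``finite decoder means'' is used.  We thus obtain finitely many real numbers $a_u$ and $a_u+d_u$, and by \eqref{eq:general-code},
\[
\E[\widehat x\mid x]=\sum_{k=1}^K\pi_k\bigl(a_{u_k}+d_{u_k}p_{u_k}(x)\bigr),
\qquad p_{u_k}(x)\in[0,1].
\]

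The key observation is that each summand lies between $\min(a_{u_k},a_{u_k}+d_{u_k})$ and $\max(a_{u_k},a_{u_k}+d_{u_k})$, regardless of the encoder.  Hence for all $x$,
\[
\bigl|\E[\widehat x\mid x]\bigr|\le \max_{1\le k\le K}\bigl(|a_{u_k}|+|d_{u_k}|\bigr)=:L<\infty .
\]
Unbiasedness would require $\E[\widehat x\mid x]=x$ for every $x\in\R$, which is impossible for $|x|>L$.  This gives the contradiction.

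The only delicate point is measure-theoretic bookkeeping in the reduction.  The encoder may depend on private randomness and on $x$ arbitrarily, but conditioning on $(x,U)$ leaves only the Bernoulli probability $p_u(x)$.  The decoder's output may in turn depend only on $(M,U)$ and decoder-private randomness independent of $x$.  It follows that $\E[\widehat x\mid x,U=u,M=b]$ does not depend on $x$ and is finite by hypothesis.  This holds under the paper's formulation, in which the decoder sees only the bit and the public seed.

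I expect no real obstacle.  The proof is essentially that a finite mixture of two-point decoders has bounded conditional mean.  I will also remark why both hypotheses are needed.  With infinite-support $U$, the levels $1/h_P(R)$ in \eqref{eq:optimal-decoder} are unbounded.  With infinite decoder means, the conditional-mean reduction fails.
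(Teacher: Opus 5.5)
Your proposal is correct and follows essentially the same argument as the paper. You replace each decoder branch by its finite conditional mean, so the conditional-mean reconstruction lies in the convex hull of the finitely many levels $\{a_u, a_u+d_u\}$ and is therefore bounded; your explicit bound $\max_k(|a_{u_k}|+|d_{u_k}|)$ is simply a concrete form of the paper's convex-hull statement.
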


Indeed, the conditional expectation lies in the convex hull of finitely many decoder means and is therefore bounded.  Proposition~\ref{prop:finite-impossible} motivates an approximate implementation that uses finitely many shared random bits and bounded decoder outputs while retaining uniform control over the low-SNR Gaussian family.

Fix $L>c$, an even integer $K$, $\Delta=2L/K$, and midpoint thresholds
\[
r_i=-L+(i-\tfrac12)\Delta,\qquad q_i=1-F_c(r_i),\qquad
S_{K,L}=\Delta\sum_{i=1}^K\sqrt{q_i(1-q_i)}.
\]
The ideal threshold probabilities are $\pi_i^\star=\Delta\sqrt{q_i(1-q_i)}/S_{K,L}$.  Let $N=2^R>K$ and choose integers $n_i\ge1$, $\sum_i n_i=N$, such that
\begin{equation}
\pi_i:=\frac{n_i}{N}\ge\left(1-\frac KN\right)\pi_i^\star.
\label{eq:prob-rounding}
\end{equation}
Such counts always exist.  A uniform shared-randomness seed of $R$ bits selects threshold $i$ with probability $\pi_i$; the encoder sends $M=\1\{X/\sigma\ge r_i\}$ and the decoder returns
\begin{equation}
\widehat X_{K,L,R}=\sigma\frac{\Delta}{\pi_i}(M-q_i).
\label{eq:finite-code}
\end{equation}

\begin{theorem}[Finite-randomness, bounded-output approximation]
\label{thm:finite-coin}
Let $t=L-c>0$.  The code \eqref{eq:finite-code} communicates one bit, uses exactly $R$ shared random bits, stores $K$ threshold/decoder entries and the counts $n_1,\ldots,n_K$ (or an equivalent cumulative seed-to-threshold map), and satisfies
\begin{align}
\sup_{|\mu|\le c\sigma}
\left|\E_\mu\widehat X_{K,L,R}-\mu\right|
&\le\sigma\left[\frac LK+2\bigl(\phi(t)-t\barPhi(t)\bigr)\right],
\label{eq:finite-bias}\\
\sup_{|\mu|\le c\sigma}\E_\mu\widehat X_{K,L,R}^2
&\le\frac{\sigma^2S_{K,L}^2}{1-K/N},
\label{eq:finite-risk}\\
|\widehat X_{K,L,R}|&\le\sigma\Delta N.
\label{eq:finite-amplitude}
\end{align}
If $N\ge K(1+\eta)/\eta$, then the multiplicative second-moment inflation in \eqref{eq:finite-risk} is at most $1+\eta$.  Moreover $S_{K,L}\to\Lambda_c$ as first $K\to\infty$ and then $L\to\infty$.  Rounding every stored decoder level to absolute precision $\tau\sigma$ adds at most $\tau\sigma$ to the bias and at most $\tau\sigma$ to the root second moment.
\end{theorem}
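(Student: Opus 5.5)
Work in standardized units ($\sigma=1$, $X=\mu+Z_0$, $|\mu|\le c$) and treat the five claims in turn: communication and amplitude, bias, second moment, convergence of $S_{K,L}$, and rounding.

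\textbf{Communication and amplitude.} The code sends a single comparison bit. The $R$-bit seed indexes $\{1,\dots,N\}$, partitioned into blocks of sizes $n_i$, so the stored data are the thresholds, the levels, and the counts. Since $n_i\ge1$ we have $\pi_i\ge1/N$, and $|M-q_i|\le1$, so $|\widehat X|\le\Delta/\pi_i\le\Delta N$. This gives \eqref{eq:finite-amplitude}.

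\textbf{Bias.} Conditioned on $x$, the $\pi_i$ cancel:
\[
\E[\widehat X\mid x]=\Delta\sum_i\bigl(\1\{x\ge r_i\}-q_i\bigr).
\]
Taking $\E_\mu$ gives the midpoint Riemann sum $\Delta\sum_i\bigl(F_c(r_i)-\Phi(r_i-\mu)\bigr)$ of $g(r)=F_c(r)-\Phi(r-\mu)$ over $[-L,L]$. The tail identity from Theorem~\ref{thm:fixed-source}, applied to the two laws, gives
\[
\mu=\int_{\R}\bigl[F_c(r)-\Phi(r-\mu)\bigr]\dd r,
\]
because $P_c^\star$ has mean $0$. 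So the bias splits into two pieces.
\begin{itemize}[leftmargin=1.6em]
\item \emph{Truncation error}, $\int_{|r|>L}|g|$. On $r>L$ both tails are at most $\barPhi(r-c)$, and $\int_t^\infty\barPhi=\phi(t)-t\barPhi(t)$. The left tail is symmetric, which gives the term $2(\phi(t)-t\barPhi(t))$.
\item \emph{Riemann error}, $\bigl|\Delta\sum_i g(r_i)-\int_{-L}^{L}g\bigr|$. The crude route is to bound each cell by $\Delta\cdot\mathrm{osc}$ and use total variation of each CDF at most $1$. That gives roughly $2\Delta=4L/K$, not the stated $L/K$. The sharper route compares the midpoint sum of a monotone function with its integral: the error in each cell is at most $\tfrac{\Delta}{2}$ times the increment over that cell. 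Summing, each CDF contributes at most $\tfrac{\Delta}{2}$, so the total is at most $\Delta=2L/K$.
\end{itemize}
This sharper route still falls short of the stated $L/K$ by a factor of two. Closing that gap is the \textbf{main obstacle}. The first thing I would try is to bound the signed difference more cleverly, using that the midpoint error is $\int_{\text{cell}}\bigl(g(r_i)-g(r)\bigr)\dd r$. Writing it this way splits the cell into left and right halves with opposite signs, which gives $\Delta/4$ per CDF and hence $2\cdot\Delta/4=L/K$.

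\textbf{Second moment.} For any $\mu$,
\[
\E_\mu\widehat X^2=\sum_i\frac{\Delta^2}{\pi_i}\,\E_\mu\bigl(M_i-q_i\bigr)^2 .
\]
This quantity is exactly the discretized analogue of the risk analyzed in Theorem~\ref{thm:gaussian-minimax}. The grid is symmetric because $K$ is even, and the midpoints pair as $r$ and $-r$. So the pairing monotonicity $\partial_aH_a\ge0$ applies termwise to each pair (their weights are equal by symmetry of $q_i(1-q_i)$). Hence the worst case is $\mu=\pm c$. There the mixture average gives $\E(M_i-q_i)^2=q_i(1-q_i)$, because $q_i$ is the mixture probability, and the endpoint risks coincide by symmetry. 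Using $\pi_i\ge(1-K/N)\pi_i^\star$,
\[
\sum_i\frac{\Delta^2q_i(1-q_i)}{\pi_i}\le\frac{1}{1-K/N}\sum_i\frac{\Delta^2q_i(1-q_i)}{\pi_i^\star}=\frac{S_{K,L}^2}{1-K/N},
\]
which is \eqref{eq:finite-risk}. The condition $N\ge K(1+\eta)/\eta$ gives $1/(1-K/N)\le1+\eta$ by algebra.

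\textbf{Remaining claims.} The existence of the counts is shown by taking $n_i=1+\lfloor(N-K)\pi_i^\star\rfloor$ and distributing the leftover units arbitrarily; then $n_i/N\ge(N-K)\pi_i^\star/N$. For the convergence of $S_{K,L}$: for fixed $L$, $\sqrt{F_c(1-F_c)}$ is continuous, so $S_{K,L}\to\int_{-L}^{L}\sqrt{F_c(1-F_c)}$ as $K\to\infty$; then monotone convergence as $L\to\infty$ gives $\Lambda_c$. For rounding, each realized output moves by at most $\tau$. Hence the conditional mean moves by at most $\tau$, and Minkowski's inequality in $L_2$ shows the root second moment moves by at most $\tau$.
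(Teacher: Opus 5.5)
\textbf{The bias bound \eqref{eq:finite-bias} has a genuine gap.} It sits exactly at the step you flag as the main obstacle, and the fix you propose does not work. Split the cell $[r_i-\Delta/2,\,r_i+\Delta/2]$ at its midpoint, with $G$ nondecreasing. This only shows that the cell error $\int_{r_i-\Delta/2}^{r_i+\Delta/2}\bigl(G(r_i)-G(r)\bigr)\,\dd r$ lies between two values: $-\tfrac{\Delta}{2}$ times the increase of $G$ on the right half-cell, and $\tfrac{\Delta}{2}$ times its increase on the left half-cell.

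The opposite signs do not halve the error, because all of the increase of $G$ may sit in left halves. This happens for $G=\Phi(\cdot-\mu)$ inside the theorem's parameter range:
\begin{itemize}
\item Take $K=2$, so $\Delta=L$ and the midpoints are $\pm L/2$. Let $L$ be large, $c$ close to $L$, and $\mu=L/2+5$.
\item Then $\Delta\sum_i\Phi(r_i-\mu)=L\bigl(\Phi(-L-5)+\Phi(-5)\bigr)$, which is negligible compared with $L$.
\item Meanwhile $\int_{-L}^{L}\Phi(r-\mu)\,\dd r\approx L/2-5$.
\end{itemize}
So this single CDF has midpoint error close to $\Delta/2$, not $\Delta/4$. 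Treating $F_c$ and $\Phi(\cdot-\mu)$ as two generic monotone functions therefore yields only $\Delta=2L/K$.

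\textbf{The missing idea is that the centering term incurs no discretization error at all.} Because $F_c(-r)=1-F_c(r)$ and $r_{K+1-i}=-r_i$, you have $q_i+q_{K+1-i}=1$. Hence
\[
\Delta\sum_iF_c(r_i)=\Delta K/2=L=\int_{-L}^{L}F_c(r)\,\dd r
\]
exactly. Only $\Phi(\cdot-\mu)$ then contributes midpoint error, and your ``sharper route'' bounds that error by $\Delta/2=L/K$. Together with your truncation estimate, which is correct, this gives \eqref{eq:finite-bias}.

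The paper uses the same cancellation pointwise. The identity $\Delta\sum_iq_i=L$ gives $\E[\widehat X_{K,L,R}\mid x]/\sigma=\Delta\,\#\{i:r_i\le x\}-L$. This is the midpoint staircase of $\clip(x,-L,L)$ and lies within $\Delta/2$ of it for every $x$. For $Y\sim\cN(\mu,1)$, the clipping bias is then at most $\E(Y-L)_++\E(-L-Y)_+\le2(\phi(t)-t\barPhi(t))$.

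\textbf{The second-moment step needs a smaller repair.} The rounded $\pi_i$ need not satisfy $\pi_i=\pi_{K+1-i}$, since the leftover seed states are distributed arbitrarily. So neither the paired monotonicity nor the equality of the two endpoint risks applies to the rounded risk. Reverse the order, as the paper does:
\begin{itemize}
\item For each fixed $\mu$, use $1/\pi_i\le(1-K/N)^{-1}/\pi_i^\star$ termwise; this is valid because each $\E_\mu(M_i-q_i)^2\ge0$.
\item Only then apply the symmetric monotonicity argument to the ideal risk with weights $1/\pi_i^\star$, whose supremum over $|\mu|\le c$ is $S_{K,L}^2$.
\end{itemize}
The remaining parts of your proposal (amplitude, counts, convergence of $S_{K,L}$, rounding) are fine.
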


For later use, define the standardized bias and second-moment constants
\begin{equation}
\beta_{K,L}:=\frac LK+2\bigl(\phi(L-c)-(L-c)\barPhi(L-c)\bigr),
\qquad
V_{K,L,R}:=\frac{S_{K,L}^2}{1-K/2^R}.
\label{eq:finite-constants}
\end{equation}
Thus $R=O(\log K+\log(1/\eta))$ shared random bits suffice for a bounded-output code approaching the minimax constant.

\begin{remark}[Explicit parameter selection]
\label{rem:finite-parameter-selection}
Let $\psi_c(z)=\sqrt{F_c(z)(1-F_c(z))}$ and
$D_{c,L}=\sup_{|z|\le L}|\psi_c'(z)|$.  For $L-c\ge1$, Appendix~\ref{app:finite} proves
\begin{equation}
|S_{K,L}-\Lambda_c|
\le
\frac{D_{c,L}L^2}{K}
+4(2\pi)^{-1/4}(L-c)^{-3/2}e^{-(L-c)^2/4}.
\label{eq:explicit-finite-rate}
\end{equation}
Consequently,
$\sqrt{V_{K,L,R}}\le(\Lambda_c+\varepsilon_{K,L})/\sqrt{1-K/2^R}$,
where $\varepsilon_{K,L}$ is the right-hand side of \eqref{eq:explicit-finite-rate}.  For a target scalar approximation level $\zeta\in(0,1)$, one may take $L-c=\Theta(\sqrt{\log(1/\zeta)})$, choose the next even integer $K\gtrsim\max\{L,D_{c,L}L^2\}/\zeta$, and finally set
$R\ge\lceil\log_2(K(1+\eta)/\eta)\rceil$.
\end{remark}

\begin{table}[H]
\centering
\caption{Tradeoffs among the exact, robustified, and finite-randomness scalar codes.}
\label{tab:code-tradeoffs}
\scriptsize
\setlength{\tabcolsep}{3pt}
\renewcommand{\arraystretch}{1.12}
\begin{tabular}{@{}>{\raggedright\arraybackslash}p{0.15\textwidth}>{\raggedright\arraybackslash}p{0.19\textwidth}>{\raggedright\arraybackslash}p{0.22\textwidth}>{\raggedright\arraybackslash}p{0.19\textwidth}>{\raggedright\arraybackslash}p{0.17\textwidth}@{}}
\toprule
Code & Unbiasedness & Public randomness / output & Endpoint absolute moments & Second-moment guarantee\\
\midrule
Exact minimax & Exact on all of $\R$ & Continuous, infinite-support; unbounded output & Finite exactly for $p<3$ & $\sigma^2\Lambda_c^2$\\
Cauchy-mixture robustified & Exact on all of $\R$ & Continuous, infinite-support; unbounded output & Finite for every $p>0$ & $\le\sigma^2\Lambda_c^2/(1-\eta)$\\
Finite-randomness grid & Gaussian-family bias $\le\sigma\beta_{K,L}$ & Exactly $R$ shared random bits; bounded output & All positive orders finite & $\le\sigma^2V_{K,L,R}$\\
\bottomrule
\end{tabular}
\end{table}

Figure~\ref{fig:diagnostics} illustrates endpoint maximality, the order-three tail transition, and finite-grid convergence.

\begin{figure}[!htbp]
\centering
\includegraphics[width=0.95\textwidth]{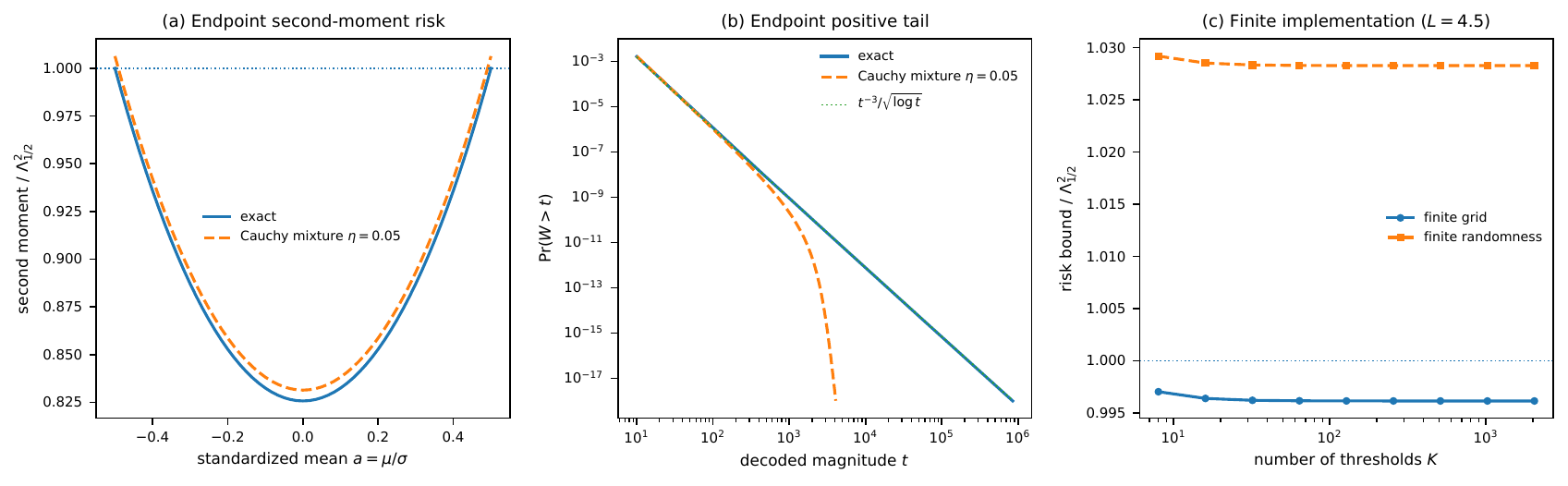}
\caption{Numerical diagnostics for $c=1/2$.  (a) Exact-code second-moment risk increases with $|\mu|$ and peaks at the endpoints.  (b) The exact endpoint survival has the predicted $t^{-3}/\sqrt{\log t}$ critical tail, while the Cauchy-mixture robustification is substantially lighter tailed.  (c) Finite-grid second-moment constants converge to $\Lambda_{1/2}^2$; each point stores $K$ thresholds and uses $R=\lceil\log_2 N\rceil$ shared random bits, with $N$ the next power of two above $21K$.  The finite-grid curve may fall below one because the finite implementation is only approximately unbiased; Theorem~\ref{thm:finite-coin} controls the corresponding bias.}
\label{fig:diagnostics}
\end{figure}

\section{Fixed-Bit Gaussian-Gradient Compression and Optimization}
\label{sec:optimization}

\subsection{Fixed-bit vector compression}

Let $B,d\in\mathbb N$ with $B,d\ge1$, and set $s=\min\{B,d\}$.  For $g\sim\cN(\mu,\sigma^2I_d)$ with $\|\mu\|_\infty\le c\sigma$, fresh public randomness selects a uniform subset $S\subseteq[d]$ of size $s$.  On each selected coordinate, the encoder and decoder run an independent scalar code, and the vector decoder returns
\begin{equation}
[\cQ_B(g)]_j=\frac ds\1\{j\in S\}\widehat g_j^{\rm sc}.
\label{eq:vector-compressor}
\end{equation}
The subset and thresholds are public.  If $B>d$, the encoder appends $B-d$ deterministic padding bits, so the message length is exactly $B$.

\begin{corollary}[Fixed-bit compression of unbounded Gaussian gradients]
\label{cor:vector-compressor}
For every deterministic $g\in\R^d$,
\[
\E[\cQ_B(g)\mid g]=g.
\]
Moreover, if $g\sim\cN(\mu,\sigma^2I_d)$ with $\|\mu\|_\infty\le c\sigma$, then
\begin{equation}
\E\|\cQ_B(g)\|_2^2
\le\Lambda_c^2\frac{\sigma^2d^2}{\min\{B,d\}}.
\label{eq:vector-secondmoment}
\end{equation}
Replacing each scalar component with the robustified code of Corollary~\ref{cor:robustified} multiplies the right-hand side by at most $1/(1-\eta)$ and makes all positive-order coordinatewise absolute moments finite.
\end{corollary}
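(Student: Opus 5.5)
The proof conditions on the public subset $S$ and splits into three parts: unbiasedness, the second-moment bound, and the robustified variant.

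\textbf{Unbiasedness.} Fix a deterministic $g$. For each coordinate $j$, conditional on $S$ with $j\in S$, the scalar code of Theorem~\ref{thm:gaussian-minimax} (applied with the known $\sigma$ and $c$) is pointwise unbiased on $\R$. Hence
\[
\E[\widehat g^{\rm sc}_j\mid g,S]=g_j .
\]
Since $S$ is a uniform $s$-subset, $\Pp(j\in S)=s/d$, so
\[
\E[\cQ_B(g)_j\mid g]=\frac ds\cdot\frac sd\,g_j=g_j .
\]
Padding bits do not affect the decoder. Independence of the public randomness from $g$ justifies conditioning first on $S$ and then averaging.

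\textbf{Second-moment bound.} Write
\[
\E\|\cQ_B(g)\|_2^2=\sum_j \frac{d^2}{s^2}\,\Pp(j\in S)\,\E\bigl[(\widehat g^{\rm sc}_j)^2\bigr]=\frac ds\sum_j\E\bigl[(\widehat g^{\rm sc}_j)^2\bigr].
\]
This uses that $S$ is independent of $g$ and of the scalar thresholds. Coordinate $g_j\sim\cN(\mu_j,\sigma^2)$ has $|\mu_j|\le c\sigma$. Theorem~\ref{thm:gaussian-minimax} therefore gives $\E(\widehat g^{\rm sc}_j)^2\le\sigma^2\Lambda_c^2$, because the worst case over $|\mu|\le c\sigma$ equals the minimax value. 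Summing over $d$ coordinates yields $\sigma^2\Lambda_c^2 d^2/s$, which is \eqref{eq:vector-secondmoment} since $s=\min\{B,d\}$. Only marginal laws of the coordinates enter here, so no cross-coordinate independence of $g$ is needed.

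\textbf{Robustified variant.} Replacing each scalar component by the Cauchy-mixture code of Corollary~\ref{cor:robustified} preserves pointwise unbiasedness. The same computation then goes through with the per-coordinate bound $\sigma^2\Lambda_c^2/(1-\eta)$ from \eqref{eq:robust-risk}. For absolute moments,
\[
\E|\cQ_B(g)_j|^p=(d/s)^{p}(s/d)\,\E|\widehat g^{\rm sc}_{j,\eta}|^p ,
\]
and this is finite by the corollary's uniform-in-$\mu$ moment bound.

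The only delicate step is justifying the interchange of the conditioning on $S$ with the scalar expectations. This follows from Tonelli applied to the nonnegative squared terms, together with independence of all public randomness from $g$. I expect no real obstacle: the corollary is a direct assembly of Theorem~\ref{thm:gaussian-minimax}, Corollary~\ref{cor:robustified}, and uniform subsampling.
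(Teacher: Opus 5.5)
Your proposal is correct and follows essentially the same route as the paper. You use $\Pp(j\in S)=s/d$ to cancel the $d/s$ scaling for unbiasedness, and you bound each coordinate's second moment by the worst-case scalar risk $\sigma^2\Lambda_c^2$ (or $\sigma^2\Lambda_c^2/(1-\eta)$ for the Cauchy-mixture code). Your explicit handling of the independence/Tonelli interchange and of the robustified absolute moments fills in details the paper leaves implicit.
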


\subsection{Exact-code guarantees on Kim's continuous quadratic hard family}

Consider Kim's continuous quadratic hard family \citep{kim2026bitconstrained}:
\begin{equation}
f_\theta(x)=\tfrac12\|x-\theta\|_2^2,
\quad \theta\in[-\delta,\delta]^d,
\quad g_t=x_t-\theta+\xi_t,
\quad \xi_t\sim\cN(0,\sigma^2I_d).
\label{eq:quadratic-family}
\end{equation}
At the parameter scale used in Kim's lower bound,
\begin{equation}
\delta^2=\frac{4\pi^2\eps^2}{d},\qquad
\eps^2\le\frac{\sigma^2d}{16\pi^2},
\label{eq:hard-scale}
\end{equation}
so $\delta\le\sigma/2$.  The estimator queries $x_t=0$ at every round, decodes $\widetilde g_t=\cQ_B(g_t)$ with $c=1/2$, averages the decoded vectors, negates the result, and projects it onto the cube.

\begin{corollary}[Matching expected rate on the hard family]
\label{cor:quadratic}
The estimator above satisfies
\begin{equation}
\sup_\theta\E\bigl[f_\theta(\widehat\theta)-f_\theta^\star\bigr]
\le\frac{\Lambda_{1/2}^2}{2}
\frac{\sigma^2d^2}{T\min\{B,d\}}.
\label{eq:quadratic-upper}
\end{equation}
Consequently it attains \eqref{eq:target-rate}, matching the lower bound in its dependence on $(\sigma,d,B,\eps)$.
\end{corollary}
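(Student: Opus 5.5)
The proof has two parts: an exact bias--variance decomposition of the excess loss, and the second-moment bound of Corollary~\ref{cor:vector-compressor}.

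\textbf{Setup.} Write $\bar g=T^{-1}\sum_{t=1}^T\widetilde g_t$. The estimator is $\widehat\theta=\Pi(-\bar g)$, where $\Pi$ is Euclidean projection onto $[-\delta,\delta]^d$. Since $f_\theta^\star=0$ and $\theta$ lies in the cube, nonexpansiveness of $\Pi$ gives
\[
f_\theta(\widehat\theta)-f_\theta^\star=\tfrac12\|\Pi(-\bar g)-\Pi(\theta)\|_2^2\le\tfrac12\|-\bar g-\theta\|_2^2 .
\]

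\textbf{Mean and variance of $\bar g$.} At $x_t=0$ we have $g_t=-\theta+\xi_t\sim\cN(-\theta,\sigma^2I_d)$. By \eqref{eq:hard-scale}, $\|\theta\|_\infty\le\delta\le\sigma/2$, so the mean condition holds with $c=1/2$. Each round uses fresh noise and fresh public randomness, so the $\widetilde g_t$ are i.i.d. Corollary~\ref{cor:vector-compressor} gives conditional unbiasedness for every deterministic input, so by the tower property $\E\widetilde g_t=\E g_t=-\theta$. Hence $-\bar g$ is unbiased for $\theta$, and
\[
\E\|-\bar g-\theta\|_2^2=\frac1T\,\E\|\widetilde g_1+\theta\|_2^2=\frac1T\Bigl(\E\|\widetilde g_1\|_2^2-\|\theta\|_2^2\Bigr)\le\frac1T\,\E\|\widetilde g_1\|_2^2 .
\]
Applying \eqref{eq:vector-secondmoment} with $c=1/2$ bounds the right-hand side by $\Lambda_{1/2}^2\sigma^2d^2/(T\min\{B,d\})$. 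Halving this and taking the supremum over $\theta$ gives \eqref{eq:quadratic-upper}.

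\textbf{Rate consequence.} To reach accuracy $\eps$ in the sense of Kim's criterion, note that $d^2/\min\{B,d\}=d\max\{1,d/B\}$. Setting the right-hand side of \eqref{eq:quadratic-upper} at most $\eps^2$, or whichever $\eps$-dependent target the lower bound uses, gives $T=O(\sigma^2d\max\{1,d/B\}/\eps^2)$. This matches \eqref{eq:target-rate}.

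\textbf{Main obstacle.} The difficulty is only bookkeeping: aligning the suboptimality scale $\eps$ with Kim's lower-bound normalization. If the lower bound is stated for the error $\|\widehat\theta-\theta\|^2$ or its square root, I would convert through $f_\theta=\tfrac12\|\cdot\|^2$ and absorb constants. Measurability and independence across rounds are routine, because the public seeds are fresh in each round.
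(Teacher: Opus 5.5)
Your proof is correct and follows the paper's argument essentially step for step: nonexpansiveness of the projection, the i.i.d.\ unbiased average whose mean-square error is at most $\E\|\widetilde g_1\|^2/T$, and the vector second-moment bound with $c=1/2$. The normalization issue you flagged does not arise, because the paper states Kim's lower bound for protocols with expected gap at most $\eps^2$, namely $T\ge\sigma^2d^2/(4\eps^2\min\{2\ln(2)B,d\})$, and setting your bound $\le\eps^2$ matches this up to constants.
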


At the endpoint means, the attaining code has an infinite third absolute moment, but finite variance is sufficient for robust estimation.  To obtain a high-probability guarantee from finite variance alone, we use a deterministic balanced coordinate schedule and coordinatewise median-of-means aggregation.

\begin{corollary}[High-probability quadratic guarantee]
\label{cor:quadratic-hp}
There is a universal constant $C$ such that, for every $\delta_0\in(0,1)$ and
$T\min\{B,d\}\ge C d\log(2d/\delta_0)$, a protocol that communicates exactly $B$ bits per query satisfies, with probability at least $1-\delta_0$,
\begin{equation}
f_\theta(\widehat\theta)-f_\theta^\star
\le C\Lambda_{1/2}^2
\frac{\sigma^2d^2}{T\min\{B,d\}}
\log\frac{2d}{\delta_0}.
\label{eq:quadratic-hp}
\end{equation}
\end{corollary}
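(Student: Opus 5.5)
We prove Corollary~\ref{cor:quadratic-hp} by replacing random subset selection with a deterministic balanced schedule and aggregating each coordinate by median-of-means over its own independent scalar reconstructions. The estimator queries $x_t=0$, so $g_t=-\theta+\xi_t\sim\cN(-\theta,\sigma^2I_d)$ with $\|\theta\|_\infty\le\delta\le\sigma/2$, and the scalar code of Theorem~\ref{thm:gaussian-minimax} applies with $c=1/2$.

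\textbf{Schedule.} At round $t$, query the $s=\min\{B,d\}$ coordinates $\{(ts+k)\bmod d: k=0,\dots,s-1\}$ and pad to exactly $B$ bits when $B>d$. Over $T$ rounds each coordinate $j$ is then encoded $n_j\ge\lfloor Ts/d\rfloor\ge Ts/(2d)$ times; the last inequality uses $Ts\ge Cd$. Fresh public randomness makes the reconstructions $\widehat g_{j,1},\dots,\widehat g_{j,n_j}$ independent across rounds, because each uses a fresh $\xi_t$ and a fresh threshold. By Theorem~\ref{thm:gaussian-minimax} each has mean $-\theta_j$: pointwise unbiasedness gives $\E[\widehat g\mid g]=g$, and $\E g_j=-\theta_j$. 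Each also has second moment at most $\sigma^2\Lambda_{1/2}^2$, so its variance is at most $\sigma^2\Lambda_{1/2}^2$.

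\textbf{Aggregation.} For each $j$, split the $n_j$ samples into $k=\lceil 8\log(2d/\delta_0)\rceil$ blocks of equal size $m=\lfloor n_j/k\rfloor\ge1$; the condition $Ts\ge Cd\log(2d/\delta_0)$ guarantees $m\ge1$. Let $\widehat\theta_j$ be the negative of the median of the block means. Chebyshev gives that each block mean deviates from $-\theta_j$ by more than $2\sigma\Lambda_{1/2}/\sqrt m$ with probability at most $1/4$. A Hoeffding bound on the number of bad blocks then shows the median fails with probability at most $e^{-k/8}\le\delta_0/(2d)$. A union bound over the $d$ coordinates gives, with probability at least $1-\delta_0/2$ (hence at least $1-\delta_0$), that for every $j$
\[
|\widehat\theta_j-\theta_j|^2\le \frac{4\sigma^2\Lambda_{1/2}^2}{m}\le C'\frac{\sigma^2\Lambda_{1/2}^2\, d\log(2d/\delta_0)}{Ts}.
\]
The last inequality uses $m\ge n_j/(2k)$ and $n_j\ge Ts/(2d)$.

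\textbf{Conclusion.} Projecting onto the cube $[-\delta,\delta]^d$ is a coordinatewise clip, so it is nonexpansive in each coordinate and cannot increase any $|\widehat\theta_j-\theta_j|$. Therefore
\[
f_\theta(\widehat\theta)-f_\theta^\star=\tfrac12\|\widehat\theta-\theta\|_2^2\le \tfrac12\cdot d\cdot C'\frac{\sigma^2\Lambda_{1/2}^2\, d\log(2d/\delta_0)}{Ts},
\]
which is \eqref{eq:quadratic-hp} after enlarging $C$.

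\textbf{Main obstacle.} The integer bookkeeping is the delicate part. The bound requires $n_j\gtrsim Ts/d$ for every coordinate and at least one sample per block. The balanced cyclic schedule gives $n_j\in\{\lfloor Ts/d\rfloor,\lceil Ts/d\rceil\}$, and the hypothesis with a large $C$ absorbs the floors. No moment beyond the second is used, so the infinite third moment from Theorem~\ref{thm:tail-phase} is irrelevant.
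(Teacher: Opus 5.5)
Your proposal is correct and follows essentially the same route as the paper: a deterministic cyclic schedule giving each coordinate $n_j\approx Ts/d$ independent unbiased scalar reconstructions with variance at most $\Lambda_{1/2}^2\sigma^2$, then coordinatewise median-of-means (Chebyshev plus Hoeffding) with $k\approx 8\log(2d/\delta_0)$ blocks, a union bound over coordinates, and nonexpansiveness of the projection. The only cosmetic difference is that the paper takes $k$ odd so the median is unambiguous; leaving the parity unspecified is harmless, because for even $k$ a median outside the interval still forces at least $k/2$ bad blocks, so the $e^{-k/8}$ bound is unchanged.
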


\subsection{End-to-end finite-randomness implementation}

\begin{corollary}[End-to-end finite-randomness optimization guarantee]
\label{cor:finite-quadratic}
Set $c=1/2$, $s=\min\{B,d\}$, and use the finite-randomness scalar code of Theorem~\ref{thm:finite-coin} on a deterministic balanced coordinate schedule.  Suppose $d$ divides $Ts$; otherwise fewer than $d/\gcd(d,s)$ additional rounds make this true.  Then every query communicates exactly $B$ bits, uses $sR$ shared random bits, and produces bounded decoder outputs.  The resulting quadratic estimator satisfies
\begin{equation}
\sup_\theta\E\bigl[f_\theta(\widehat\theta_{K,L,R})-f_\theta^\star\bigr]
\le
\frac{\sigma^2V_{K,L,R}d^2}{2T\min\{B,d\}}
+\frac{d\sigma^2\beta_{K,L}^2}{2}.
\label{eq:finite-quadratic}
\end{equation}
If each stored decoder level is rounded to absolute precision $\tau\sigma$, the same statement holds with $\beta_{K,L}$ replaced by $\beta_{K,L}+\tau$ and $V_{K,L,R}$ replaced by $(\sqrt{V_{K,L,R}}+\tau)^2$.
\end{corollary}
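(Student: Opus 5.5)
We prove Corollary~\ref{cor:finite-quadratic} with a bias--variance decomposition. The only nonroutine ingredient is how the deterministic balanced schedule interacts with the per-coordinate bias. The estimator queries $x_t=0$, so $g_t=-\theta+\xi_t\sim\cN(-\theta,\sigma^2I_d)$, and $\|\theta\|_\infty\le\delta\le\sigma/2$ by \eqref{eq:hard-scale}. Hence every coordinate lies in the family $|\mu|\le c\sigma$ with $c=1/2$, and Theorem~\ref{thm:finite-coin} applies coordinatewise.

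\textbf{Schedule.} Since $d\mid Ts$, we can choose the round-$t$ subsets $S_t$ of size $s$ cyclically so that each coordinate $j$ is selected exactly $n:=Ts/d$ times. Because the schedule is deterministic, no subset randomness is needed. Only the $s$ scalar codes consume shared bits, giving $sR$ bits per query, and padding makes each message exactly $B$ bits. For each $j$ we set $\bar g_j=\frac1n\sum_{t:j\in S_t}\widehat g^{\,\mathrm{sc}}_{t,j}$ (equivalently, the $\frac ds$-reweighted average of the decoded vectors), and take $\widehat\theta_j=\clip_{[-\delta,\delta]}(-\bar g_j)$. Outputs are bounded by \eqref{eq:finite-amplitude}.

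\textbf{Per-coordinate error.} Fix $j$. The $n$ scalar outputs are independent, since fresh noise and fresh seeds are used each round, and they are identically distributed with mean $b_j:=\E\widehat g^{\,\mathrm{sc}}_j$. By \eqref{eq:finite-bias}, $|b_j-(-\theta_j)|\le\sigma\beta_{K,L}$, and by \eqref{eq:finite-risk}, $\E(\widehat g^{\,\mathrm{sc}}_j)^2\le\sigma^2V_{K,L,R}$. Projection onto the interval is $1$-Lipschitz and $\theta_j$ lies in it, so $\E(\widehat\theta_j-\theta_j)^2\le\E(\bar g_j+\theta_j)^2$. Expanding, this equals
\[
\Var(\bar g_j)+(b_j+\theta_j)^2\le\frac{\sigma^2V_{K,L,R}}{n}+\sigma^2\beta_{K,L}^2,
\]
where we bounded the variance by the second moment. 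Summing over $d$ coordinates and using $f_\theta(\widehat\theta)-f^\star_\theta=\frac12\|\widehat\theta-\theta\|^2$ gives
\[
\frac{d\,\sigma^2V_{K,L,R}}{2n}+\frac{d\sigma^2\beta_{K,L}^2}{2},
\]
and $d/n=d^2/(Ts)$ yields \eqref{eq:finite-quadratic}. If $d\nmid Ts$, adding fewer than $d/\gcd(d,s)$ rounds restores divisibility, because $Ts\bmod d$ ranges over the multiples of $\gcd(d,s)$.

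\textbf{Rounding.} Rounding each stored decoder level to precision $\tau\sigma$ perturbs every scalar output by at most $\tau\sigma$ pointwise. By the last clause of Theorem~\ref{thm:finite-coin}, the bias bound becomes $\sigma(\beta_{K,L}+\tau)$. Minkowski's inequality gives root second moment at most $\sigma(\sqrt{V_{K,L,R}}+\tau)$. Substituting these into the same computation gives the stated replacements.

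The main point to check carefully is the per-coordinate step. The deterministic schedule is needed because random subsets with the $d/s$ reweighting would multiply the bias by a random factor. Keeping the bias term free of $T$ also requires treating the bias as the mean shift of i.i.d. summands rather than bounding it through the second moment.
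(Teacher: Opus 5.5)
Your proof is correct and follows the paper's argument essentially step for step. The steps match: a cyclic schedule giving each coordinate exactly $n=Ts/d$ observations, the exact decomposition into $\Var(Y_{j,1})/n+(\E Y_{j,1}+\theta_j)^2$, projection nonexpansiveness, and Jensen/Minkowski for the rounding clause.

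One correction to your closing remark only: the deterministic schedule is not needed to control the bias. With uniform random subsets, the factor $\frac ds\1\{j\in S_t\}$ is independent of the scalar output and has mean one, so the same bound results. Its actual role is that no public bits are spent on coordinate selection, which is what makes the count exactly $sR$ shared bits per query.
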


Equation~\eqref{eq:finite-quadratic} separates the sampling-variance term from the squared bias introduced by the finite approximation.

\section{Discussion}

The results distinguish three phenomena.  First, unbounded Gaussian observations do not alter the expected query-complexity rate on the localized quadratic family.  Second, exact scalar second-moment optimality entails the endpoint order-three absolute-moment transition.  Third, exact pointwise unbiasedness on all of $\R$ is incompatible with finite public support and finite decoder means, and also with uniformly bounded decoder outputs.  Corollary~\ref{cor:robustified} and Theorem~\ref{thm:finite-coin} quantify two corresponding relaxations: a small second-moment slack restores all positive-order absolute moments, whereas finite shared randomness and bounded outputs introduce a controlled approximation bias.

Open directions include the optimal constant for joint $B$-bit vector coding, the joint bias--variance--randomness--storage tradeoff, and localization mechanisms for general strongly-convex objectives.

\section{Conclusion}

We characterized the source-dependent optimum of public-coin, pointwise-unbiased one-bit compression, identified a least-favorable prior for the Gaussian family, and showed that attaining the Gaussian minimax optimum forces an order-three absolute-moment transition.  The Cauchy-mixture and finite-randomness constructions quantify complementary tradeoffs among second moment, higher moments, bias, and shared-randomness complexity.  Coordinate allocation then gives a fixed-bit Gaussian-gradient scheme with lower-bound-matching parameter dependence on Kim's continuous quadratic hard family.

\section*{Reproducibility Statement}
The supplementary code archive contains arbitrary-precision evaluations of $\Lambda_c$, scripts for generating every panel of Figure~\ref{fig:diagnostics}, finite-grid and probability-rounding checks, unit tests, and exact communication accounting.  The file \texttt{reproduce\_all.sh} regenerates the reported numerical outputs and runs the full test suite.  No numerical computation enters any theorem proof.

\begingroup\raggedright
\ifanonymoussubmission
An anonymized code archive is included in the supplementary material.
\if\relax\detokenize\expandafter{\CodeRepositoryURL}\relax\else
An anonymous repository mirror is available at \url{\CodeRepositoryURL}.
\fi
\else
\if\relax\detokenize\expandafter{\CodeRepositoryURL}\relax
Code and exact reproduction instructions are included in the accompanying release archive.
\else
Code and exact reproduction instructions are available at \href{\CodeRepositoryURL}{\texttt{github.com/jiangxioabai/onebit\_unbiased\_compression}}.
\fi
\fi
The archive reproduces Table~\ref{tab:constants}, all panels of Figure~\ref{fig:diagnostics}, and the finite-grid calculations used in Theorem~\ref{thm:finite-coin}.  The numerical scripts are diagnostic only.
\par\endgroup

\section*{AI Use Statement}
Generative AI tools were used to explore candidate proof strategies, check algebra, assist with literature triage and code debugging, and edit prose.
\ifanonymoussubmission
The authors independently verified all mathematical claims, citations, and numerical outputs; complete proofs and reproduction scripts are provided.  The authors are responsible for the final manuscript.
\else
The authors independently verified all mathematical claims, citations, and numerical outputs; complete proofs and reproduction scripts are provided.  The authors are responsible for the final manuscript.
\fi

%% file: appendix.tex
\section{Code-Class Reductions and Distributional Preliminaries}
\label{app:prelim}

We first record reductions used throughout the proofs.

\begin{lemma}[Decoder randomization is unnecessary]
\label{lem:decoder-randomization}
For any public-coin one-bit code with randomized decoder, replacing the output by its conditional mean given $(U,M)$ preserves the conditional mean for every input and weakly decreases the source-averaged second moment.
\end{lemma}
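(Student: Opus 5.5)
The plan is to apply the tower property and conditional Jensen to the pair $(U,M)$. Start with a public-coin one-bit code that has public randomness $U$, message $M\in\{0,1\}$, and a randomized output $\widehat x$. Its decoder may use private randomness $V$, which is independent of $x$ given $(U,M)$. Define the derandomized output
\[
\widetilde x:=\E[\widehat x\mid U,M],
\]
which is a measurable function of $(U,M)$ alone; for fixed $u$ it takes only the two values $\widetilde x(u,0)$ and $\widetilde x(u,1)$. I will first note that the law of $(U,M)$ given $x$ does not depend on the decoder. This holds because $U\perp x$ and $M$ is produced from $(x,U)$ and encoder-private randomness. Consequently the conditional law of the decoder randomness given $(U,M)$ is the same whether or not one also conditions on $x$. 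This yields
\[
\E[\widehat x\mid x,U,M]=\widetilde x(U,M).
\]

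For unbiasedness, the tower property gives, for every input $x$,
\[
\E[\widetilde x\mid x]=\E\bigl[\E[\widehat x\mid x,U,M]\mid x\bigr]=\E[\widehat x\mid x],
\]
so the conditional mean is preserved pointwise on whatever set $D$ the original code was unbiased. For the second moment, conditional Jensen (equivalently, the conditional variance decomposition) gives
\[
\E[\widehat x^2\mid x,U,M]=\widetilde x(U,M)^2+\Var(\widehat x\mid x,U,M)\ \ge\ \widetilde x(U,M)^2.
\]
Taking expectations over $X\sim P$ then gives $V_P(\widetilde{\cC})\le V_P(\cC)$. The inequality is strict unless the decoder output is a.s.\ a function of $(U,M)$.

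The main obstacle is measure-theoretic rather than conceptual. I need a measurable version of the conditional expectation in the identity $\E[\widehat x\mid x,U,M]=\widetilde x(U,M)$, and I need the argument to go through when $\E|\widehat x|$ might be infinite. I will handle integrability by noting that if $V_P(\cC)=\infty$ the second-moment claim is trivial. If $V_P(\cC)<\infty$, then $\widehat x\in L^2$ and all conditional expectations are well defined. For the pointwise-in-$x$ mean statement, I will assume, as the definition of a code having a conditional mean at $x$ requires, that $\E[|\widehat x|\mid x]<\infty$. I will work with regular conditional distributions of the decoder randomness given $(U,M)$, which exist on standard Borel spaces. Finally, I will observe that $\widetilde x$ has the form $a_u+d_u M$, which justifies the two-level parametrization in \eqref{eq:general-code}.
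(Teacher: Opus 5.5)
Your proposal is correct and follows the paper's proof: the tower property over $(U,M)$ preserves $\E[\widehat x\mid x]$, and conditional Jensen gives $\E\widetilde X^2\le\E\widehat X^2$. Your extra step is that the decoder's private randomness is independent of $x$, $U$, and the encoder's randomness, which makes $\widetilde x$ a source-independent function of $(U,M)$; the paper leaves this implicit. That fact follows directly from the independence assumption you stated, not from your "consequently" clause about the law of $(U,M)$ given $x$.
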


\begin{proof}
Let $\widehat X$ be the original output and put $\widetilde X=\E[\widehat X\mid U,M]$.  Iterated expectation preserves pointwise unbiasedness.  Conditional Jensen gives $\E\widetilde X^2\le\E\widehat X^2$.
\end{proof}

After this reduction, for fixed $U=u$ the decoder has two values $a_u$ and $a_u+d_u$.  Encoder private randomness is summarized by $p_u(x)=\Pp(M=1\mid x,U=u)$.  If $X\sim P$ and $\alpha_u=\E_Pp_u(X)$, then
\begin{equation}
\E[\widehat X^2\mid U=u]
=(a_u+d_u\alpha_u)^2+d_u^2\alpha_u(1-\alpha_u).
\label{eq:conditional-secondmoment-app}
\end{equation}

For a Borel distribution $P$, write $F(x^-)=\lim_{y\uparrow x}F(y)$ and $\Delta F(x)=F(x)-F(x^-)$.  If $V\sim\operatorname{Unif}[0,1]$ is independent of $X\sim P$, the distributional transform
\begin{equation}
U_P=F(X^-)+V\Delta F(X)
\label{eq:distributional-transform}
\end{equation}
is uniform on $[0,1]$.  Let
\[
\psi(u)=\sqrt{u(1-u)},\qquad
\gamma(u)=-\psi'(u)=\frac{2u-1}{2\sqrt{u(1-u)}}.
\]
Define the generalized witness
\begin{equation}
G_P(X)=\E[\gamma(U_P)\mid X].
\label{eq:generalized-witness}
\end{equation}
For continuous strictly increasing $F$, this reduces to $G_P(x)=\gamma(F(x))$.
For $K>0$, also define the bounded odd truncations
\begin{equation}
\gamma_K(u)=\clip(\gamma(u),-K,K),
\qquad
G_{P,K}(X)=\E[\gamma_K(U_P)\mid X].
\label{eq:truncated-witness}
\end{equation}
Both $\gamma$ and $\gamma_K$ have mean zero under a uniform argument.

\begin{lemma}[Randomized bathtub inequality]
\label{lem:bathtub}
For every measurable $p:\R\to[0,1]$ and $\alpha=\E_Pp(X)$,
\begin{equation}
\left|\E_P[G_P(X)p(X)]\right|
\le\sqrt{\alpha(1-\alpha)}.
\label{eq:bathtub-app}
\end{equation}
The same upper bound holds with $G_P$ replaced by every $G_{P,K}$.  If $F$ is continuous and strictly increasing and $0<\alpha<1$, equality in \eqref{eq:bathtub-app} holds only when $p$ is almost surely the indicator of an upper or lower quantile interval, up to a null set.
\end{lemma}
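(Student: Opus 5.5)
The idea is to move everything to the uniform scale through the distributional transform and then apply a bathtub (rearrangement) argument there. Let $V$ be independent, put $U_P=F(X^-)+V\Delta F(X)$, and define $\tilde p(u)$ on $[0,1]$ so that $\E[\tilde p(U_P)\mid X]=p(X)$. The natural choice is $\tilde p(U_P)=p(X)$, i.e.\ constant on each atom's interval. Then
\[
\E_P[G_P(X)p(X)]=\E[\gamma(U_P)\,p(X)]=\int_0^1\gamma(u)\,\tilde p(u)\dd u,
\]
with $\tilde p:[0,1]\to[0,1]$ measurable and $\int_0^1\tilde p=\alpha$. The first equality follows from the tower property, because $p(X)$ is $X$-measurable and $G_P(X)=\E[\gamma(U_P)\mid X]$. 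This reduces the inequality to a statement about functions on $[0,1]$ with Lebesgue measure, for which $\gamma$ is strictly increasing.

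Next comes the bathtub step. Among $\tilde p$ with values in $[0,1]$ and integral $\alpha$, the integral $\int\gamma\tilde p$ is maximized by $\tilde p=\1_{[1-\alpha,1]}$, since $\gamma$ is increasing, and minimized by $\1_{[0,\alpha]}$. To see this, write
\[
\int\gamma(\tilde p-\1_{[1-\alpha,1]})=\int(\gamma-\gamma(1-\alpha))(\tilde p-\1_{[1-\alpha,1]}).
\]
Here the integrand is pointwise $\le0$; the added constant contributes nothing because both functions have integral $\alpha$. Integrability holds because $\gamma\in L_1$: it behaves like $u^{-1/2}$ near the endpoints. Because $\gamma=-\psi'$, we get
\[
\int_{1-\alpha}^1\gamma=\psi(1-\alpha)-\psi(1)=\sqrt{\alpha(1-\alpha)}
\quad\text{and}\quad
\int_0^\alpha\gamma=-\sqrt{\alpha(1-\alpha)},
\]
which gives the two-sided bound. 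The truncated version uses the same argument with $\gamma_K$. It is nondecreasing, so the extremizers are the same. Since $\gamma_K$ is the clip of $\gamma$, it is odd about $1/2$ and $\gamma_K\le\gamma$ on $[1/2,1]$. Hence
\[
\int_{1-\alpha}^1\gamma_K\le\int_{1-\alpha}^1\gamma
\]
when $1-\alpha\ge1/2$. When $1-\alpha<1/2$, write the integral as the integral over $[1/2,1]$ plus that over $[1-\alpha,1/2]$; the second piece is nonpositive and dominated in magnitude by the corresponding piece for $\gamma$. A cleaner route is $\int_{1-\alpha}^1\gamma_K=-\int_0^{1-\alpha}\gamma_K$, together with a case split on whether $\alpha\le1/2$: on the shorter side, the tail portion where clipping acts only decreases $|\cdot|$. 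I expect this monotone comparison to be the only fiddly point.

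For equality, assume $F$ is continuous and strictly increasing, so that $U_P=F(X)$ and $\tilde p=p\circ F^{-1}$. Equality in the upper bound forces $(\gamma-\gamma(1-\alpha))(\tilde p-\1_{[1-\alpha,1]})=0$ a.e. Since $\gamma$ is strictly increasing, $\gamma\ne\gamma(1-\alpha)$ off a single point, so $\tilde p=\1_{[1-\alpha,1]}$ a.e. on $[0,1]$. Transporting back through $F$, which maps $P$ to Lebesgue measure, gives $p=\1\{x\ge F^{-1}(1-\alpha)\}$ $P$-a.s., an upper quantile interval. The lower bound similarly gives a lower quantile interval. The main obstacle is verifying the truncated comparison cleanly, together with the measurability of the lift $\tilde p$ when atoms are present. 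The lift is harmless because it is constant on atom intervals, and no conditional expectation of $\tilde p$ is actually needed.
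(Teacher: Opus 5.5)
Your proposal is correct and follows the paper's proof: you lift to the uniform scale via the distributional transform, apply the bathtub principle to the increasing $\gamma$ (you make it explicit with the $\gamma-\gamma(1-\alpha)$ centering, where the paper simply cites it), and handle $\gamma_K$ by the same case split on $\alpha\le1/2$ using oddness and $\int_0^1\gamma_K=0$. Your first split over $[1/2,1]\cup[1-\alpha,1/2]$ for $\alpha>1/2$ does not close on its own, because clipping lowers the first piece but raises the negative second piece; the step that actually does the work is your ``cleaner route'' $\int_{1-\alpha}^1\gamma_K=-\int_0^{1-\alpha}\gamma_K$, and the trivial cases $\alpha\in\{0,1\}$, where $\gamma(1-\alpha)$ is infinite, should be set aside before centering.
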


\begin{proof}
By \eqref{eq:distributional-transform},
\[
\E[G_P(X)p(X)]=\E[\gamma(U_P)p(X)].
\]
Conditioning on $U_P=u$ writes the right-hand side as $\int_0^1\gamma(u)q(u)\,\dd u$ for some $q:[0,1]\to[0,1]$ with $\int_0^1q=\alpha$.  Since $\gamma$ is strictly increasing, the bathtub principle gives
\[
\int_0^1\gamma(u)q(u)\,\dd u
\le\int_{1-\alpha}^1\gamma(u)\,\dd u
=\sqrt{\alpha(1-\alpha)}.
\]
The lower bound is obtained from the interval $[0,\alpha]$.  Strict monotonicity gives the equality characterization in the continuous case.

For $\gamma_K$, the same bathtub argument reduces the maximum to the upper interval.  If $\alpha\le1/2$, clipping only decreases the nonnegative integrand on that interval.  If $\alpha>1/2$, odd symmetry and zero mean rewrite its upper-interval integral as minus the lower-interval integral of length $1-\alpha$, whose magnitude also decreases under clipping.  Thus its absolute optimum is no larger than $\sqrt{\alpha(1-\alpha)}$.
\end{proof}

\begin{lemma}[Dual correlation identity]
\label{lem:dual-correlation}
If $\E X^2<\infty$ and $\cJ(P)<\infty$, then
\begin{equation}
\E_P[XG_P(X)]=\cJ(P).
\label{eq:dual-correlation-app}
\end{equation}
\end{lemma}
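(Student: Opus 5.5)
The plan is to transfer the expectation to the uniform variable $U_P$ and then apply a one-dimensional integration by parts. By the definition \eqref{eq:generalized-witness} and the tower property, $\E[XG_P(X)]=\E[X\gamma(U_P)]$. The next step is to express $X$ through $U_P$. Let $F^{-1}(u)=\inf\{x:F(x)\ge u\}$ be the quantile function. Then almost surely $X=F^{-1}(U_P)$: on an atom at $x$, $U_P$ lies in $(F(x^-),F(x)]$, where $F^{-1}$ equals $x$, and the boundary values form a null set. This gives
\[
\E[XG_P(X)]=\int_0^1 F^{-1}(u)\gamma(u)\,\dd u .
\]
Integrability follows from Cauchy--Schwarz once it is known that $\gamma\in L_2$; it is not, since $\int\gamma^2=\infty$. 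So integrability has to be handled more carefully, and I address it below.

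Next, write $\gamma=-\psi'$ and integrate by parts on $[\epsilon,1-\epsilon]$:
\[
\int_\epsilon^{1-\epsilon}F^{-1}(u)\gamma(u)\,\dd u
=-\bigl[F^{-1}\psi\bigr]_\epsilon^{1-\epsilon}+\int_{[\epsilon,1-\epsilon]}\psi(u)\,\dd F^{-1}(u).
\]
The Stieltjes form is valid because $F^{-1}$ is monotone and $\psi$ is continuous. The final integral $\int_0^1\psi(u)\,\dd F^{-1}(u)$ should equal $\int_\R\psi(F(r))\,\dd r=\cJ(P)$. This is the layer-cake identity: a horizontal strip $\{(r,u):F(r)<u\le F(r)\text{ or }\ldots\}$ counted against $\psi$ is the same whether one integrates over $u$ against $\dd F^{-1}$ or over $r$. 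Concretely, I would write $F^{-1}(u_2)-F^{-1}(u_1)=\int\1\{u_1\le F(r)<u_2\}\,\dd r$ (up to endpoint conventions) and apply Fubini with $\psi\ge0$. Flat regions of $F$ correspond to jumps of $F^{-1}$, where $F(r)$ is constant. Atoms of $F$ correspond to flat pieces of $F^{-1}$, which carry no $\dd F^{-1}$-mass, matching the Lebesgue-null jump point in $r$.

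The main obstacle is the boundary term $F^{-1}(u)\psi(u)$ as $u\to0,1$, together with justifying the limit of the truncated integrals. Near $u=1$, $|F^{-1}(u)|\sqrt{1-u}\to0$ because of the finite second moment: $\int_u^1F^{-1}(v)^2\,\dd v\to0$ and $F^{-1}$ is monotone, so $F^{-1}(u)^2(1-u)\le\int_u^1F^{-1}(v)^2\,\dd v$ once $F^{-1}(u)\ge0$. The end $u\to0$ is symmetric. This gives convergence of the truncated left side to $\cJ(P)$ by monotone convergence on the right, since $\psi\,\dd F^{-1}\ge0$.

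To make $\E[X\gamma(U_P)]$ a genuine Lebesgue integral, I would split $X=X^+-X^-$. On the region where $F^{-1}(u)>0$, which is $u$ near $1$, $\gamma>0$ eventually, so the integrand is of one sign near each endpoint. Absolute integrability then follows from finiteness of the truncated limit via monotone convergence on the tails. Alternatively, I would run the argument with the bounded $\gamma_K$ and let $K\to\infty$ by monotone convergence on each one-signed tail. The result is $\E_P[XG_P(X)]=\cJ(P)$.
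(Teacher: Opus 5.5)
Your proposal is correct and follows essentially the same route as the paper: the quantile coupling $X=F^{-1}(U_P)$, integration by parts with $\gamma=-\psi'$, the change of variables $\int\psi\,\dd F^{-1}=\int\sqrt{F(1-F)}\,\dd r$, boundary terms that vanish because $F^{-1}\in L_2$ is monotone, and one-signed tails giving absolute integrability.

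The differences are cosmetic. You truncate in $u$ on $[\epsilon,1-\epsilon]$, whereas the paper clips the quantile at $\pm M$. Your interval identity $F^{-1}(u_2)-F^{-1}(u_1)=\mathrm{Leb}\{r:u_1\le F(r)<u_2\}$ proves the change of variables more cleanly than the paper's finite-support approximation. One point should be stated explicitly: a finite symmetric limit yields absolute integrability only because, near each endpoint, $F^{-1}\gamma$ is either bounded by a multiple of $|\gamma|$ or nonnegative. Alternatively, integrate by parts on each half-interval separately, as the paper does.
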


\begin{proof}
Let $Q(u)=\inf\{x:F(x)\ge u\}$ be the generalized quantile.  The distributional transform couples $X$ and $U_P$ so that $X=Q(U_P)$ almost surely.  Hence
\[
\E[XG_P(X)]=\int_0^1Q(u)\gamma(u)\,\dd u.
\]
For the clipped quantile $Q_M=\clip(Q,-M,M)$, Stieltjes integration by parts gives
\[
\int_0^1Q_M(u)\gamma(u)\,\dd u
=\int_{(0,1)}\psi(u)\,\dd Q_M(u).
\]
The quantile change-of-variables identity for monotone functions yields
\[
\int_{(0,1)}\psi(u)\,\dd Q_M(u)
=\int_{-M}^{M}\sqrt{F(r)(1-F(r))}\,\dd r
\]
up to endpoint terms that vanish as $M\to\infty$.  This identity is immediate for finite-support distributions, and the general case follows by monotone approximation of $Q_M$.

We make the tail integrability explicit.  Since $Q$ is monotone and $Q\in L_2(0,1)$,
\[
\sqrt{1-u}\,Q(u)\longrightarrow0\quad(u\uparrow1),
\qquad
\sqrt{u}\,Q(u)\longrightarrow0\quad(u\downarrow0).
\]
For example, if $Q$ is eventually positive near one, then
$(1-u)Q(u)^2\le\int_u^1Q(v)^2\,\dd v\to0$; if it is not, monotonicity makes it bounded on that tail.  The lower endpoint is analogous.  Monotonicity also implies that $Q(u)\gamma(u)$ has a fixed sign on sufficiently small neighborhoods of zero and one.  On each such tail, integration by parts identifies the absolute integral of $Q\gamma$ with a finite boundary term plus the corresponding tail contribution to $\int\psi\,\dd Q=\cJ(P)$.  On the remaining compact interval, $\gamma$ is bounded and $Q\in L_1(0,1)$.  Hence $Q\gamma$ is absolutely integrable.  Therefore
\[
\E[XG_{P,K}(X)]
=\int_0^1Q(u)\gamma_K(u)\,\dd u
\longrightarrow
\int_0^1Q(u)\gamma(u)\,\dd u
=\cJ(P)
\]
by dominated convergence.  This also justifies the preceding $M\to\infty$ limit and proves the lemma.
\end{proof}

\section{Proof of the Fixed-Source Characterization}
\label{app:fixed-source}

\subsection{Global converse for arbitrary Borel sources}

Fix any code pointwise unbiased on $\R$ and use Lemma~\ref{lem:decoder-randomization}.  For each public state $u$, let
\[
\alpha_u=\E p_u(X),\qquad
m_u=a_u+d_u\alpha_u,
\qquad
c_{u,K}=d_u\E[G_{P,K}(X)p_u(X)].
\]
Lemma~\ref{lem:bathtub} and \eqref{eq:conditional-secondmoment-app} give, for every $K$,
\begin{equation}
\E[\widehat X^2\mid U=u]
\ge m_u^2+c_{u,K}^2.
\label{eq:local-two-dimensional-truncated}
\end{equation}
Pointwise unbiasedness gives $\E_Um_u=m$.  Because $G_{P,K}$ is bounded and centered, Fubini is now immediate and yields
\begin{align}
\E_Uc_{u,K}
&=\E\!\left[G_{P,K}(X)\E_Uq_U(X)\right]\\
&=\E[XG_{P,K}(X)]
=:\cJ_K(P).
\label{eq:truncated-correlation}
\end{align}
Consequently, Jensen's inequality in $\R^2$ gives
\begin{equation}
V_P(\cC)\ge m^2+\cJ_K(P)^2
\qquad\text{for every }K.
\label{eq:truncated-converse}
\end{equation}
By Lemma~\ref{lem:dual-correlation} and dominated truncation,
$\cJ_K(P)\to\cJ(P)$.  Letting $K\to\infty$ proves
\begin{equation}
V_P(\cC)\ge m^2+\cJ(P)^2.
\label{eq:fixed-converse}
\end{equation}
This truncation step avoids assuming a priori that the unbounded dual witness can be interchanged with the public-seed integral.

\subsection{Achievability on the convex hull of the support}

We first treat the degenerate case.  If $\cJ(P)=0$ and $P$ were nondegenerate, there would exist $a<b$ such that $P(X\le a)>0$ and $P(X\ge b)>0$.  Then $F(r)(1-F(r))$ would be bounded away from zero on a subinterval of $(a,b)$, contradicting $\cJ(P)=0$.  Hence $P=\delta_m$.  Choose any everywhere-positive density $h_0$ on $\R$, draw $R\sim h_0$, send $M=\1\{x\ge R\}$, and decode
\[
\widehat x=m+\frac{M-\1\{m\ge R\}}{h_0(R)}.
\]
The tail identity gives $\E[\widehat x\mid x]=x$ for every $x\in\R$, while $\widehat X=m$ almost surely under $P=\delta_m$.  Thus the full-line second-moment risk is exactly $m^2$.

Assume now that $0<\cJ(P)<\infty$.  Let $I_P$ be the convex hull of the support and let
\[
h_P(r)=\frac{\sqrt{F(r)(1-F(r))}}{\cJ(P)}
\]
on the interior of $I_P$.  Values of $F$ at atoms do not affect this Lebesgue density.  Draw $R\sim h_P$, send $M=\1\{x\ge R\}$, and decode as in \eqref{eq:optimal-decoder}.  For every $x\in I_P$,
\begin{align}
\E_R[\widehat x\mid x]
&=m+\int_{I_P}\left(\1\{x\ge r\}-(1-F(r))\right)\,\dd r\\
&=x.
\label{eq:interval-tail-identity}
\end{align}
The final equality is the usual positive/negative-part identity, valid on the convex hull.  Under $X\sim P$, conditional on $R=r$, the bit has success probability $1-F(r)$ for Lebesgue-almost every $r$.  Therefore
\begin{align}
\E\widehat X^2
&=m^2+\int_{I_P}\frac{F(r)(1-F(r))}{h_P(r)}\,\dd r\\
&=m^2+\cJ(P)^2.
\label{eq:fixed-achieve}
\end{align}
If $I_P=\R$, this code is pointwise unbiased on all of $\R$.

If $I_P\ne\R$, choose any everywhere-positive density $r_0$ and set
\[
h_{P,\eta}=(1-\eta)h_P+\eta r_0,
\qquad 0<\eta<1,
\]
where $h_P$ is extended by zero outside $I_P$.  Use the same centered threshold decoder with $h_{P,\eta}$.  Since $h_{P,\eta}>0$ on $\R$, the full tail integral now gives pointwise unbiasedness for every real input.  Moreover $F(1-F)=0$ outside $I_P$ and $h_{P,\eta}\ge(1-\eta)h_P$ inside, so
\begin{equation}
V_P(\cC_\eta)
\le m^2+\frac{\cJ(P)^2}{1-\eta}.
\label{eq:full-line-approx-source}
\end{equation}
Letting $\eta\downarrow0$ and combining with \eqref{eq:fixed-converse} proves Theorem~\ref{thm:fixed-source}.

Finally, pointwise unbiasedness on the source support implies
\[
\E(\widehat X-X)^2
=\E\widehat X^2-2\E[X\E(\widehat X\mid X)]+\E X^2
=V_P(\cC)-\E X^2.
\]

\subsection{The uniform-source formulas}

For $P=\operatorname{Unif}[0,1]$,
\[
\cJ(P)=\int_0^1\sqrt{r(1-r)}\,\dd r=\frac\pi8.
\]
Hence $h_P(r)=(8/\pi)\sqrt{r(1-r)}$.  Since $m=1/2$ and $1-F(r)=1-r$, the two decoder levels are
\begin{align*}
\widehat x_1(r)
&=\frac12+\frac{r}{h_P(r)}
=\frac12+\frac\pi8\sqrt{\frac r{1-r}},\\
\widehat x_0(r)
&=\frac12-\frac{1-r}{h_P(r)}
=\frac12-\frac\pi8\sqrt{\frac{1-r}{r}}.
\end{align*}
The optimal average MSE is
\[
\frac14+\frac{\pi^2}{64}-\E X^2
=\frac{\pi^2}{64}-\frac1{12}.
\]
This proves Corollary~\ref{cor:uniform}.

\section{Equality Cases and Essential Uniqueness}
\label{app:uniqueness}

Assume throughout this section that $P$ has a finite second moment, $F$ is continuous and strictly increasing on $\R$, and $0<\cJ(P)<\infty$.  Lemmas~\ref{lem:bathtub} and~\ref{lem:dual-correlation}, together with the bounded witnesses $G_{P,K}$, provide all integrability and limiting statements used below.  We call two codes equivalent if, after possibly relabeling the two messages and refining the public seed by independent randomness, they agree outside a set of zero public-source measure.

\begin{proof}[Proof of Theorem~\ref{thm:uniqueness}]
Suppose a code attains $m^2+\cJ(P)^2$.  Define
\[
c_u=d_u\E[G_P(X)p_u(X)].
\]
For almost every $u$, dominated convergence under $P$ gives $c_{u,K}\to c_u$.  Moreover, \eqref{eq:local-two-dimensional-truncated} implies $|c_{u,K}|^2\le \E[\widehat X^2\mid U=u]$.  Hence dominated convergence in $L_2(U)$ gives
\[
\E_Uc_u=\lim_{K\to\infty}\E_Uc_{u,K}=\cJ(P),
\qquad
\E[\widehat X^2\mid U=u]\ge m_u^2+c_u^2.
\]
Thus the limiting two-dimensional lower-bound chain is exact.  Decoder-side conditional variance must vanish by Lemma~\ref{lem:decoder-randomization}, and equality in Jensen implies
\begin{equation}
m_u=m,
\qquad
c_u=\cJ(P)
\label{eq:jensen-equality}
\end{equation}
for almost every $u$.  In particular, $0<\alpha_u<1$ and $d_u\ne0$ almost surely.

Equality in the limiting local inequality forces equality in Lemma~\ref{lem:bathtub}.  Since $F$ and $\gamma\circ F$ are strictly increasing, $p_u$ is almost surely an upper or lower threshold indicator.  Relabeling the bit when necessary, write
\begin{equation}
p_u(x)=\1\{x\ge r_u\},\qquad d_u>0.
\label{eq:equality-threshold}
\end{equation}
Then $\alpha_u=1-F(r_u)$ and
\[
\E[G_P(X)p_u(X)]=\sqrt{F(r_u)(1-F(r_u))}.
\]
Equation \eqref{eq:jensen-equality} therefore gives
\begin{equation}
d_u=\frac{\cJ(P)}{\sqrt{F(r_u)(1-F(r_u))}},
\qquad
a_u=m-d_u(1-F(r_u)).
\label{eq:equality-levels}
\end{equation}

It remains to identify the threshold law.  The threshold location can be chosen measurably as $r_u=F^{-1}(1-\alpha_u)$.  Let
\[
E=\{(u,x):p_u(x)\ne\1\{x\ge r_u\}\}.
\]
Equation \eqref{eq:equality-threshold} gives $(P_U\otimes P)(E)=0$.  Fubini therefore yields a common set $D\subseteq\R$ with $P(D)=1$ such that, for every $x\in D$, the threshold identity holds for $P_U$-almost every $u$.  Since $F$ is strictly increasing, every nonempty open interval has positive $P$-mass, so $D$ is dense in $\R$.

For $x<y$ with $x,y\in D$, pointwise unbiasedness and the simultaneous threshold representation give
\begin{align}
y-x
&=\E_U[q_U(y)-q_U(x)]\\
&=\E_U\left[d_U\1\{x<r_U\le y\}\right].
\label{eq:weighted-threshold-measure}
\end{align}
Define the positive Borel measure
\[
\nu(A)=\E_U[d_U\1\{r_U\in A\}].
\]
Equation \eqref{eq:weighted-threshold-measure} shows that $\nu((x,y])=y-x$ whenever $x,y\in D$.  For arbitrary $x<y$, first choose $x_n\in D$ with $x_n\downarrow x$ and use continuity from below for $(x_n,y']$ with fixed $y'\in D$; then choose $y_n\in D$ with $y_n\downarrow y$ and use continuity from above on a finite enclosing interval.  Hence $\nu((x,y])=y-x$ for all $x<y$, so $\nu$ is Lebesgue measure.

Substituting \eqref{eq:equality-levels} now shows that the marginal law of $r_U$ has density
\[
\frac{\sqrt{F(r)(1-F(r))}}{\cJ(P)}=h_P(r).
\]
The decoder levels in \eqref{eq:equality-levels} are exactly those of \eqref{eq:optimal-decoder}.  This proves essential uniqueness.
\end{proof}

\section{Proof of the Gaussian Minimax Theorem}
\label{app:gaussian}

By scale equivariance it suffices to prove the theorem for $\sigma=1$.  Let
\[
P_c^\star=\tfrac12\cN(-c,1)+\tfrac12\cN(c,1),
\qquad
F_c(z)=\tfrac12\Phi(z-c)+\tfrac12\Phi(z+c).
\]
This distribution is symmetric and has mean zero.

\subsection{Least-favorable-prior lower bound}

For any code,
\begin{align*}
\sup_{|a|\le c}\E_a\widehat X^2
&\ge\tfrac12\left(\E_c\widehat X^2+\E_{-c}\widehat X^2\right)\\
&=\E_{X\sim P_c^\star}\widehat X^2\\
&\ge\Lambda_c^2,
\end{align*}
where the last step is Theorem~\ref{thm:fixed-source}.  Thus $P_c^\star$ is a candidate least-favorable prior.

\subsection{Second-moment monotonicity of the mixture-centered code}

Consider the code \eqref{eq:gaussian-code}.  For $z\ge0$, put
\[
q(z)=1-F_c(z)\le\tfrac12.
\]
At parameter $a\in[0,c]$, the bit success probabilities at thresholds $z$ and $-z$ are
\[
p_+(a,z)=1-\Phi(z-a),
\qquad
p_-(a,z)=\Phi(z+a).
\]
The two squared numerators in the code contribute
\begin{align}
H_a(z)
={}&q(z)^2+p_+(a,z)(1-2q(z))\\
&+(1-q(z))^2+p_-(a,z)(2q(z)-1)\\
={}&q(z)^2+(1-q(z))^2\\
&-(1-2q(z))\bigl(\Phi(z-a)+\Phi(z+a)-1\bigr).
\label{eq:paired-risk-app}
\end{align}
Differentiating gives
\begin{equation}
\partial_a H_a(z)
=(1-2q(z))\bigl(\phi(z-a)-\phi(z+a)\bigr)\ge0,
\label{eq:paired-risk-derivative}
\end{equation}
because $|z-a|\le z+a$ and $\phi$ is even and decreases with absolute value.  Since $\rho_c$ is symmetric,
\[
\Risk(a)=\int_0^\infty\frac{H_a(z)}{\rho_c(z)}\,\dd z
\]
is nondecreasing on $[0,c]$.  Reflection symmetry gives $\Risk(a)=\Risk(-a)$.  The two endpoint second-moment risks are equal and their average is the fixed-source risk under $P_c^\star$, namely $\Lambda_c^2$.  Hence
\[
\sup_{|a|\le c}\Risk(a)=\Lambda_c^2.
\]
Together with the lower bound, this proves \eqref{eq:gaussian-minimax-value}.

If a minimax code attains $\Lambda_c^2$, then both endpoint second-moment risks are at most $\Lambda_c^2$, while their average is at least $\Lambda_c^2$.  Equality therefore holds in the fixed-source theorem for $P_c^\star$, and Theorem~\ref{thm:uniqueness} gives the asserted essential uniqueness.  Restoring scale multiplies decoder outputs by $\sigma$ and second moments by $\sigma^2$.

\section{Heavy-Tail Necessity and Robustification}
\label{app:tail}

\subsection{Moment phase transition}

Fix $c\ge0$ and write
\[
F(z)=F_c(z),\qquad q(z)=1-F_c(z),\qquad
\rho(z)=\rho_c(z)=\frac{\sqrt{F(z)q(z)}}{\Lambda_c}.
\]
Under $X\sim\cN(a,1)$, let $p_a(z)=\Pp(X\ge z)=\barPhi(z-a)$.  The $p$th absolute moment of the standardized reconstruction is
\begin{equation}
\E_a|W_{a,c}|^p
=\int_{\R}
\left[p_a(z)F(z)^p+(1-p_a(z))q(z)^p\right]
\rho(z)^{1-p}\,\dd z.
\label{eq:p-moment-integral}
\end{equation}
On compact sets this integrand is finite.  We analyze the two tails.

For $z\to+\infty$,
\begin{equation}
q(z)=\tfrac12\barPhi(z-c)+\tfrac12\barPhi(z+c)
\asymp \barPhi(z-c),
\qquad F(z)\to1.
\label{eq:q-asymptotic}
\end{equation}
The second term in \eqref{eq:p-moment-integral} is integrable for every fixed $p>0$.  The first term is, up to multiplicative constants,
\begin{equation}
\barPhi(z-a)\,\barPhi(z-c)^{-(p-1)/2}.
\label{eq:moment-tail-comparison}
\end{equation}
Mills' ratio gives
\begin{align}
\log\left(
\barPhi(z-a)\barPhi(z-c)^{-(p-1)/2}
\right)
={}&\frac{p-3}{4}z^2\\
&+\left(a-\frac{p-1}{2}c\right)z+O(\log z).
\label{eq:moment-tail-exponent}
\end{align}
Therefore the positive tail is integrable for every $a\le c$ when $p<3$, is integrable for $p=3$ precisely when $a<c$, and diverges for every $a$ when $p>3$.  The negative tail is obtained by replacing $a$ by $-a$.  This proves the moment classification in Theorem~\ref{thm:tail-phase}.

\subsection{Endpoint survival asymptotics}

At $a=c$, a large positive reconstruction occurs when a large positive threshold is crossed.  Define
\[
w(z)=\frac{F(z)}{\rho(z)}
=\Lambda_c\sqrt{\frac{F(z)}{q(z)}}.
\]
This is strictly increasing for all sufficiently large $z$.  Let $z_t$ solve $w(z_t)=t$.  Then
\begin{equation}
q(z_t)\asymp t^{-2},
\qquad
z_t-c\asymp\sqrt{\log t}.
\label{eq:zt-asymptotic}
\end{equation}
For large $t$,
\begin{align}
\Pp_c(W_{c,c}>t)
&=\int_{z_t}^{\infty}\rho(z)\barPhi(z-c)\,\dd z\\
&\asymp\int_{z_t}^{\infty}q(z)^{3/2}\,\dd z.
\label{eq:tail-integral-q}
\end{align}
Mills' ratio and one integration by parts imply
\begin{equation}
\int_{z}^{\infty}q(u)^{3/2}\,\dd u
\asymp\frac{q(z)^{3/2}}{z-c}.
\label{eq:q-tail-integral}
\end{equation}
Substituting \eqref{eq:zt-asymptotic} proves
\[
\Pp_c(W_{c,c}>t)
=\Theta\!\left(t^{-3}(\log t)^{-1/2}\right).
\]
The negative endpoint follows by reflection.  Essential uniqueness from Theorem~\ref{thm:gaussian-minimax} transfers this obstruction to every attaining minimax code.

\subsection{Cauchy-mixture robustification}
\label{app:robustification}

Let
\[
r(z)=\frac1{\pi(1+z^2)},
\qquad
\rho_{c,\eta}(z)=(1-\eta)\rho_c(z)+\eta r(z).
\]
Draw $Z\sim\rho_{c,\eta}$ and decode
\[
W_\eta(x,Z)
=\frac{\1\{x\ge Z\}-(1-F_c(Z))}{\rho_{c,\eta}(Z)}.
\]
For every fixed $x$,
\begin{align*}
\E_ZW_\eta(x,Z)
&=\int_{\R}\left(\1\{x\ge z\}-(1-F_c(z))\right)\,\dd z\\
&=x,
\end{align*}
because $P_c^\star$ has mean zero.  Under the endpoint mixture,
\begin{align}
\E_{P_c^\star}W_\eta^2
&=\int_{\R}\frac{F_c(z)(1-F_c(z))}{\rho_{c,\eta}(z)}\,\dd z\\
&\le\frac1{1-\eta}\int_{\R}\frac{F_c(z)(1-F_c(z))}{\rho_c(z)}\,\dd z\\
&=\frac{\Lambda_c^2}{1-\eta}.
\label{eq:robust-mixture-risk}
\end{align}
The paired-risk derivative \eqref{eq:paired-risk-derivative} only uses symmetry and a positive common denominator for $z$ and $-z$.  Since $\rho_{c,\eta}$ is symmetric, the worst-case risk remains at the endpoints.  This proves \eqref{eq:robust-risk}.

For moments, $\rho_{c,\eta}(z)\ge\eta/[\pi(1+z^2)]$.  Hence the magnitude of either decoder branch is at most $C_\eta(1+z^2)$.  A large branch at $z\to+\infty$ is selected with Gaussian probability $\barPhi(z-a)$, and the reflected statement holds at $-\infty$.  Consequently every polynomial moment is integrable uniformly over $|a|\le c$.  This proves Corollary~\ref{cor:robustified}.

\subsection{Sensitivity to scale misspecification}

The pointwise-unbiased identity does not require the design scale to equal the true noise scale.  If the encoder uses $\widetilde\sigma>0$, thresholds $\widetilde\sigma Z$, and decoder multiplier $\widetilde\sigma$, then $\E[\widehat x\mid x]=x$ for every $x$.  The attaining code's second moment can nevertheless become infinite.  Put
\[
\tau=\frac{\sigma}{\widetilde\sigma},
\qquad
a=\frac{\mu}{\widetilde\sigma}.
\]
The positive-tail exponent is
\[
-\frac{(z-a)^2}{2\tau^2}+\frac{(z-c)^2}{4}.
\]
Thus the second moment is finite if $\tau<\sqrt2$, is finite at $\tau=\sqrt2$ precisely when $|a|<c$, and is infinite when $\tau>\sqrt2$.  The Cauchy-mixture code has all positive-order absolute moments for every finite $(\mu,\sigma,\widetilde\sigma)$, providing a robust alternative when the noise scale is uncertain.

\section{Finite Public Randomness and Finite Precision}
\label{app:finite}

\subsection{Why exact full-line unbiasedness needs infinitely many public-seed states}

\begin{proof}[Proof of Proposition~\ref{prop:finite-impossible}]
Let the public seed take values in a finite set $\mathcal U$.  After replacing decoder randomness by conditional means, the decoder has finitely many values
\[
\{a_0(u),a_1(u):u\in\mathcal U\}.
\]
For every input $x$, the conditional mean reconstruction lies in the convex hull of this finite set, hence in a bounded interval.  It cannot equal $x$ for all $x\in\R$.
\end{proof}

The argument also shows that a uniformly bounded decoder cannot be exactly pointwise unbiased on the full real line, irrespective of the cardinality of the public seed.

\subsection{Probability rounding}

Recall the midpoint grid
\[
r_i=-L+(i-\tfrac12)\Delta,
\qquad \Delta=\frac{2L}{K},
\qquad i=1,\ldots,K,
\]
and let
\[
q_i=1-F_c(r_i),\quad v_i=q_i(1-q_i),\quad
S_{K,L}=\Delta\sum_{i=1}^K\sqrt{v_i},\quad
\pi_i^\star=\frac{\Delta\sqrt{v_i}}{S_{K,L}}.
\]

\begin{lemma}[Finite-state probability rounding]
\label{lem:prob-rounding}
For every integer $N>K$, there are integers $n_i\ge1$, $\sum_i n_i=N$, such that
\[
\frac{n_i}{N}\ge\left(1-\frac KN\right)\pi_i^\star
\qquad\text{for every }i.
\]
\end{lemma}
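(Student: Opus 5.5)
The plan is a floor-then-distribute construction.  Set $n_i=\lfloor (N-K)\pi_i^\star\rfloor+1$ for each $i$.  Each $n_i$ is an integer with $n_i\ge1$, and
\[
n_i\ge (N-K)\pi_i^\star,
\]
because $\lfloor y\rfloor+1>y$.  Dividing by $N$ gives $n_i/N\ge(1-K/N)\pi_i^\star$, which is the required inequality.

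Next we control the total count.  Since $\sum_i\pi_i^\star=1$, which holds by the definition of $S_{K,L}$, we get
\[
\sum_i n_i\le (N-K)\sum_i\pi_i^\star+K=N.
\]
Write $\delta:=N-\sum_i n_i\ge 0$ for the deficit.  We add the $\delta$ leftover units to any of the counts, for example all to $n_1$.  Increasing a count preserves both $n_i\ge1$ and the lower bound $n_i/N\ge(1-K/N)\pi_i^\star$, and afterwards $\sum_i n_i=N$ exactly.

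Some details need checking.  First, the $\pi_i^\star$ must be well defined, which requires $S_{K,L}>0$.  This holds because $0<F_c<1$ everywhere, so every $v_i>0$.  Second, the hypothesis $N>K$ is only needed so that $N-K\ge0$; the case $N-K=0$ cannot occur.

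Every step is elementary, so there is no real obstacle.  The only point needing care is the sum bound, which uses the strict inequality $\lfloor y\rfloor\le y$ together with exactly $K$ added ones.
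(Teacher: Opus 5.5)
Your proposal is correct and is essentially the paper's proof: it makes the same choice $n_i=\lfloor (N-K)\pi_i^\star\rfloor+1$, bounds $\sum_i n_i\le N$ using $\sum_i\pi_i^\star=1$, and distributes the leftover states arbitrarily. One wording slip: the sum bound uses the non-strict inequality $\lfloor y\rfloor\le y$, not a strict one, and the argument is unaffected.
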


\begin{proof}
Set
\[
n_i^0=\left\lfloor (N-K)\pi_i^\star\right\rfloor+1.
\]
Then $n_i^0\ge1$, $n_i^0\ge(N-K)\pi_i^\star$, and
\[
\sum_i n_i^0\le (N-K)\sum_i\pi_i^\star+K=N.
\]
Distribute the remaining $N-\sum_i n_i^0$ states arbitrarily.  Dividing by $N$ proves the claim.
\end{proof}

A uniform shared-randomness string of $R$ bits, $N=2^R$, can therefore be mapped to the $K$ thresholds with probabilities $\pi_i=n_i/N$.  Duplicate public-seed states assigned to the same threshold use the same decoder table entry.  A direct representation of the counts uses $O(KR)$ bits, while an implementation may instead store a cumulative seed-to-threshold table or generate the mapping offline.

\subsection{Bias, second moment, and output magnitude}

For a deterministic standardized input $x$, the conditional expectation of \eqref{eq:finite-code} is
\begin{align}
\E[\widehat X_{K,L,R}\mid x]
&=\sigma\Delta\sum_{i=1}^K\bigl(\1\{x\ge r_i\}-q_i\bigr).
\label{eq:finite-conditional-mean}
\end{align}
Because the grid and $F_c$ are symmetric, $q_i+q_{K+1-i}=1$, so
\[
\Delta\sum_{i=1}^Kq_i=L.
\]
Thus the bracketed expression is the midpoint staircase approximation to $\clip(x,-L,L)$, and
\begin{equation}
\left|\frac1\sigma\E[\widehat X_{K,L,R}\mid x]
-\clip(x,-L,L)\right|
\le\frac\Delta2=\frac LK.
\label{eq:midpoint-staircase}
\end{equation}

Let $Y\sim\cN(a,1)$ with $|a|\le c$ and $L=c+t$.  The clipping bias satisfies
\begin{align}
\left|\E\clip(Y,-L,L)-a\right|
&\le\E(Y-L)_++\E(-L-Y)_+\\
&\le2\left(\phi(t)-t\barPhi(t)\right).
\label{eq:clipping-bias}
\end{align}
Combining \eqref{eq:midpoint-staircase} and \eqref{eq:clipping-bias} proves \eqref{eq:finite-bias}.

We next control the second moment.  For the ideal probabilities $\pi_i^\star$, define
\[
R_a^\star
=\sum_{i=1}^K\frac{\Delta^2}{\pi_i^\star}
\left[p_{a,i}(1-q_i)^2+(1-p_{a,i})q_i^2\right],
\]
where $p_{a,i}=\barPhi(r_i-a)$.  Pairing thresholds $z$ and $-z$ gives exactly the numerator $H_a(z)$ in \eqref{eq:paired-risk-app}.  Since $\pi_i^\star=\pi_{K+1-i}^\star$, \eqref{eq:paired-risk-derivative} implies that $R_a^\star$ is nondecreasing in $|a|$.  At the endpoints, reflection symmetry makes the two risks equal, and their average under $P_c^\star$ is
\begin{align}
R_c^\star
&=\sum_{i=1}^K\frac{\Delta^2v_i}{\pi_i^\star}
=\left(\Delta\sum_{i=1}^K\sqrt{v_i}\right)^2
=S_{K,L}^2.
\label{eq:discrete-ideal-risk}
\end{align}
Hence $\sup_{|a|\le c}R_a^\star=S_{K,L}^2$.

For the rounded probabilities, Lemma~\ref{lem:prob-rounding} gives termwise
\[
\frac1{\pi_i}\le\frac1{1-K/N}\frac1{\pi_i^\star}.
\]
Therefore
\[
\sup_{|a|\le c}R_a
\le\frac{S_{K,L}^2}{1-K/N},
\]
which proves \eqref{eq:finite-risk} after restoring scale.

Finally $|M-q_i|\le1$ and $\pi_i\ge1/N$, so
\[
|\widehat X_{K,L,R}|
\le\sigma\Delta N.
\]
This proves \eqref{eq:finite-amplitude}.

If $N\ge K(1+\eta)/\eta$, then $K/N\le\eta/(1+\eta)$ and $(1-K/N)^{-1}\le1+\eta$.  Taking
\[
R=\left\lceil\log_2\frac{K(1+\eta)}{\eta}\right\rceil
\]
therefore suffices.

\subsection{Convergence and stored precision}

Let
\[
\psi_c(z)=\sqrt{F_c(z)(1-F_c(z))},
\qquad
D_{c,L}=\sup_{|z|\le L}|\psi_c'(z)|.
\]
The midpoint rule gives
\begin{equation}
\left|S_{K,L}-\int_{-L}^{L}\psi_c(z)\,\dd z\right|
\le\frac{D_{c,L}L^2}{K}.
\label{eq:midpoint-error}
\end{equation}
The Gaussian tail bound $\barPhi(u)\le\phi(u)/u$ gives, for $t=L-c\ge1$,
\begin{equation}
0\le\Lambda_c-\int_{-L}^{L}\psi_c(z)\,\dd z
\le4(2\pi)^{-1/4}t^{-3/2}e^{-t^2/4}.
\label{eq:lambda-tail-truncation}
\end{equation}
Thus the finite-code constant converges explicitly to $\Lambda_c$.

The shared threshold grid can be chosen dyadic, so each standardized threshold has $O(\log K+\log L)$ bits of storage.  Suppose the two reconstruction levels associated with every threshold are rounded so that the implemented output $\widetilde X$ satisfies
\[
|\widetilde X-\widehat X_{K,L,R}|\le\tau\sigma
\quad\text{almost surely}.
\]
Then
\begin{align*}
\left|\E\widetilde X-\mu\right|
&\le\left|\E\widehat X_{K,L,R}-\mu\right|+\tau\sigma,\\
\sqrt{\E\widetilde X^2}
&\le\sqrt{\E\widehat X_{K,L,R}^2}+\tau\sigma
\end{align*}
by Jensen and Minkowski.  Since the unrounded levels are bounded by $\sigma\Delta N$, storing each level to absolute precision $\tau\sigma$ requires
$O(\log(\Delta N/\tau))$ magnitude bits.  This completes the proof of Theorem~\ref{thm:finite-coin}.

\section{Vector Compression and Fixed-Bit Accounting}
\label{app:vector}

Let $s=\min\{B,d\}$ and draw a uniform subset $S\subseteq[d]$ of cardinality $s$ using fresh public randomness.  For $j\in S$, let $Q_j^{\rm sc}(g_j)$ be an independent scalar reconstruction from Theorem~\ref{thm:gaussian-minimax}; for $j\notin S$, return zero.  Define
\[
[\cQ_B(g)]_j=\frac ds\1\{j\in S\}Q_j^{\rm sc}(g_j).
\]
For every deterministic vector $g$,
\[
\E[\cQ_B(g)_j\mid g]
=\frac sd\frac ds\E[Q_j^{\rm sc}(g_j)\mid g_j]
=g_j.
\]
If $g\sim\cN(\mu,\sigma^2I_d)$ and $\|\mu\|_\infty\le c\sigma$, then
\begin{align*}
\E\|\cQ_B(g)\|^2
&=\sum_{j=1}^d\left(\frac ds\right)^2\Pp(j\in S)
\E[Q_j^{\rm sc}(g_j)^2]\\
&\le\frac ds\cdot d\sigma^2\Lambda_c^2
=\Lambda_c^2\frac{\sigma^2d^2}{s}.
\end{align*}
Exactly one bit is sent for each selected coordinate.  The subset and ordering are public.  If $B>d$, the remaining positions are deterministic padding.

For adaptive optimization, round-$t$ thresholds and subsets must be fresh after $x_t$ is selected.  Equivalently, split the public seed into independent blocks $U_1,\ldots,U_T$ and require $x_t$ to depend only on the previous blocks and transcript.  This is a permitted specialization of the public-coin oracle model.

\section{Expected Quadratic Optimization Guarantee}
\label{app:quadratic}

At $x_t=0$ in \eqref{eq:quadratic-family},
\[
g_t=-\theta+\xi_t\sim\cN(-\theta,\sigma^2I_d),
\qquad
\|\theta\|_\infty\le\delta\le\sigma/2.
\]
Let $\widetilde g_t=\cQ_B(g_t)$ using $c=1/2$.  Then $\E\widetilde g_t=-\theta$, and independent oracle noise and fresh public randomness give
\begin{align*}
\E\left\|-\frac1T\sum_{t=1}^T\widetilde g_t-\theta\right\|^2
&=\frac1{T^2}\sum_{t=1}^T\E\|\widetilde g_t+\theta\|^2\\
&=\frac1T\Var(\widetilde g_1)\\
&\le\frac1T\E\|\widetilde g_1\|^2\\
&\le\frac{\Lambda_{1/2}^2\sigma^2d^2}{T\min\{B,d\}}.
\end{align*}
Projection onto the cube containing $\theta$ is nonexpansive.  Since
\[
f_\theta(x)-f_\theta^\star=\frac12\|x-\theta\|^2,
\]
this proves \eqref{eq:quadratic-upper}.

For comparison, Theorem~3 of \citet{kim2026bitconstrained} states that at the scale \eqref{eq:hard-scale}, any protocol with expected gap at most $\eps^2$ must satisfy
\[
T\ge\frac{\sigma^2d^2}{4\eps^2\min\{2\ln(2)B,d\}}.
\]
The upper and lower bounds therefore have identical dependence on $(\sigma,d,B,\eps)$.

\subsection{Finite-randomness implementation}

We prove Corollary~\ref{cor:finite-quadratic}.  Let $s=\min\{B,d\}$ and assume $n:=Ts/d$ is an integer.  A deterministic cyclic schedule assigns exactly $n$ scalar observations to every coordinate, so no shared randomness is needed to specify coordinate indices.  Each active coordinate uses an independent $R$-bit shared seed for the code of Theorem~\ref{thm:finite-coin}; when $B>d$, the remaining message positions are deterministic padding.

For coordinate $j$, let $Y_{j,1},\ldots,Y_{j,n}$ be the scalar reconstructions of Gaussian observations with mean $-\theta_j$.  By Theorem~\ref{thm:finite-coin} and \eqref{eq:finite-constants},
\[
\left|\E Y_{j,\ell}+\theta_j\right|\le\sigma\beta_{K,L},
\qquad
\E Y_{j,\ell}^2\le\sigma^2V_{K,L,R}.
\]
Set $\widetilde\theta_j=-n^{-1}\sum_{\ell=1}^nY_{j,\ell}$ and project $\widetilde\theta$ onto the parameter cube.  Independence across repeated observations gives
\begin{align*}
\E(\widetilde\theta_j-\theta_j)^2
&=\frac{\Var(Y_{j,1})}{n}+\left(\E Y_{j,1}+\theta_j\right)^2\\
&\le\frac{\sigma^2V_{K,L,R}}{n}+\sigma^2\beta_{K,L}^2.
\end{align*}
Summing over coordinates, using projection nonexpansiveness, and multiplying by $1/2$ proves \eqref{eq:finite-quadratic}.  If $d\nmid Ts$, fewer than $d/\gcd(d,s)$ additional rounds make the divisibility condition hold; the symbol $T$ in Corollary~\ref{cor:finite-quadratic} denotes the resulting total number of rounds.  The stored-level rounding statement follows from the bias and root-second-moment perturbation bounds in Theorem~\ref{thm:finite-coin}.

\section{High-Probability Quadratic Estimation from Finite Variance}
\label{app:high-probability}

We give an elementary median-of-means proof of Corollary~\ref{cor:quadratic-hp}.  The protocol uses a deterministic cyclic schedule assigning $s=\min\{B,d\}$ distinct coordinates per round.  Thus coordinate $j$ receives
\[
n_j\in\left\{\left\lfloor\frac{Ts}{d}\right\rfloor,
\left\lceil\frac{Ts}{d}\right\rceil\right\}
\]
independent scalar reconstructions.  No Horvitz--Thompson factor is needed in this offline coordinatewise estimator.  Each scalar reconstruction $Y_{j,\ell}$ satisfies
\[
\E Y_{j,\ell}=-\theta_j,
\qquad
\Var(Y_{j,\ell})\le\E Y_{j,\ell}^2
\le\Lambda_{1/2}^2\sigma^2.
\]

We use the following standard finite-variance lemma.

\begin{lemma}[Scalar median of means]
\label{lem:mom}
Let $Y_1,\ldots,Y_n$ be independent with common mean $\nu$ and variance at most $v$.  Let $k$ be an odd integer with $n\ge2k$.  Split the first $k\lfloor n/k\rfloor$ samples into $k$ equal blocks and take the median of the block means, denoted $\widehat\nu_{\rm MOM}$.  Then
\[
\Pp\!\left(
|\widehat\nu_{\rm MOM}-\nu|^2>\frac{8vk}{n}
\right)
\le e^{-k/8}.
\]
\end{lemma}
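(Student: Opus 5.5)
The plan is the standard two-step median-of-means argument: Chebyshev's inequality on each block, followed by a binomial (Hoeffding) bound on the number of bad blocks.

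\emph{Block-level bound.} Let $b=\lfloor n/k\rfloor$ be the block length. Since $n\ge2k$, we have $b\ge n/(2k)$; indeed $b\ge n/k-1\ge n/(2k)$ exactly when $n\ge2k$. Each block mean $\bar Y_r$, $r=1,\ldots,k$, is an average of $b$ independent variables with mean $\nu$ and variance at most $v$. Hence $\Var(\bar Y_r)\le v/b\le 2vk/n$. Put $\epsilon^2=8vk/n$. Chebyshev gives
\[
\Pp\bigl(|\bar Y_r-\nu|>\epsilon\bigr)\le\frac{2vk/n}{8vk/n}=\frac14 .
\]

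\emph{Median step.} If $|\widehat\nu_{\rm MOM}-\nu|>\epsilon$, then at least $(k+1)/2$ of the block means lie on the same side of $\nu$ at distance more than $\epsilon$. In particular, the number $N_{\rm bad}$ of blocks with $|\bar Y_r-\nu|>\epsilon$ is at least $k/2$. The blocks use disjoint samples, so the bad-block indicators are independent Bernoulli variables with success probability at most $1/4$. Hoeffding's inequality then gives
\[
\Pp(N_{\rm bad}\ge k/2)\le\exp\!\bigl(-2k(1/2-1/4)^2\bigr)=e^{-k/8},
\]
which is exactly the claimed bound.

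The only step needing care is the block-length inequality $\lfloor n/k\rfloor\ge n/(2k)$ that feeds the constant $8$. This is where the hypothesis $n\ge2k$ is used. Discarding the leftover samples keeps the blocks independent and equal in length, so no further obstacle arises. Oddness of $k$ is used only to make the median a single block mean.
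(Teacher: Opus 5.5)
Your proposal is correct and follows the paper's proof essentially step for step. In both, Chebyshev is applied to each block, using $\lfloor n/k\rfloor\ge n/(2k)$ to bound each bad-block probability by $1/4$, and then Hoeffding on the bad-block count gives $e^{-k/8}$; the only cosmetic difference is that you place the Chebyshev threshold directly at $\sqrt{8vk/n}$, whereas the paper uses $2\sqrt{v/m}$ with $m=\lfloor n/k\rfloor$ and converts via $4v/m\le 8vk/n$ at the end.
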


\begin{proof}
Let $m=\lfloor n/k\rfloor\ge n/(2k)$.  Each block mean has variance at most $v/m$, so Chebyshev gives
\[
\Pp\left(|\overline Y_b-\nu|>2\sqrt{v/m}\right)\le\frac14.
\]
If the median is outside this interval, at least half the blocks are bad.  Hoeffding's inequality bounds this event by $e^{-k/8}$.  Finally $4v/m\le8vk/n$.
\end{proof}

Choose $k$ to be the smallest odd integer at least $8\log(2d/\delta_0)$.  When $Ts\ge4dk$, each $n_j\ge2k$.  Applying Lemma~\ref{lem:mom} to every coordinate and taking a union bound gives, with probability at least $1-\delta_0$,
\begin{align*}
\|\widehat\theta-\theta\|^2
&\le\sum_{j=1}^d\frac{8\Lambda_{1/2}^2\sigma^2k}{n_j}\\
&\le16\Lambda_{1/2}^2\sigma^2
\frac{d^2k}{Ts}.
\end{align*}
Projection onto the cube can only decrease this error, and multiplication by $1/2$ gives \eqref{eq:quadratic-hp} with a universal constant.

\section{Localized Strongly-Convex Extension}
\label{app:general-sgd}

The quadratic result does not require iterative gradient descent.  Nevertheless, the compressor can be inserted into projected SGD without assuming bounded stochastic-gradient realizations.

\begin{theorem}[Projected SGD with localized Gaussian gradient means]
\label{thm:localized-sgd}
Let $\mathcal X\subseteq\R^d$ be closed and convex.  Suppose $f$ is differentiable, $\mu$-strongly convex, and $L$-smooth on $\mathcal X$, and its unconstrained minimizer $x^\star$ lies in $\mathcal X$.  Let
\[
g_t=\nabla f(x_t)+\xi_t,
\qquad
\xi_t\stackrel{\rm iid}{\sim}\cN(0,\sigma^2I_d),
\]
and assume only
\[
\sup_{x\in\mathcal X}\|\nabla f(x)\|_\infty\le c\sigma.
\]
Run
\[
x_{t+1}=\Pi_{\mathcal X}\bigl(x_t-\eta_t\cQ_B(g_t)\bigr),
\qquad
\eta_t=\frac1{\mu(t+1)},
\]
using fresh public randomness.  With
\[
M_c=\Lambda_c^2\frac{\sigma^2d^2}{\min\{B,d\}},
\]
we have
\begin{align}
\E\|x_{T+1}-x^\star\|^2
&\le\frac1{T+1}
\max\left\{\|x_1-x^\star\|^2,\frac{M_c}{\mu^2}\right\},
\label{eq:localized-distance}\\
\E[f(x_{T+1})-f(x^\star)]
&\le\frac{L}{2(T+1)}
\max\left\{\|x_1-x^\star\|^2,\frac{M_c}{\mu^2}\right\}.
\label{eq:localized-function}
\end{align}
\end{theorem}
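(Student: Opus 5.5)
This is the standard SGD argument for strongly convex objectives with a $1/(\mu t)$ step size. The one ingredient specific to this paper is that the compressed gradient has an unconditional second-moment bound. That bound holds because every iterate stays in $\mathcal X$, where the gradient mean is localized.

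\textbf{Step 1: unbiasedness and the second moment of the compressed gradient.} Let $\mathcal F_t$ be the $\sigma$-field generated by the past public-seed blocks and transcripts, so $x_t$ is $\mathcal F_t$-measurable. Conditionally on $\mathcal F_t$, the oracle output $g_t$ is distributed as $\cN(\nabla f(x_t),\sigma^2I_d)$. Since $x_t\in\mathcal X$ after projection, the hypothesis gives $\|\nabla f(x_t)\|_\infty\le c\sigma$. The public randomness used in round $t$ is fresh, as required in Appendix~\ref{app:vector}. Corollary~\ref{cor:vector-compressor} then yields
\[
\E[\cQ_B(g_t)\mid\mathcal F_t]=\nabla f(x_t),\qquad \E[\|\cQ_B(g_t)\|^2\mid\mathcal F_t]\le M_c .
\]
The second bound uses the sup over means from Theorem~\ref{thm:gaussian-minimax} coordinatewise, so it is uniform over all $x_t\in\mathcal X$.

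\textbf{Step 2: one-step recursion.} Since $x^\star\in\mathcal X$, $\Pi_{\mathcal X}$ is nonexpansive, and $\Pi_{\mathcal X}(x^\star)=x^\star$, we get
\[
\|x_{t+1}-x^\star\|^2\le\|x_t-x^\star\|^2-2\eta_t\langle \cQ_B(g_t),x_t-x^\star\rangle+\eta_t^2\|\cQ_B(g_t)\|^2 .
\]
Because $x^\star$ is the unconstrained minimizer, $\nabla f(x^\star)=0$, and strong convexity gives $\langle\nabla f(x_t),x_t-x^\star\rangle\ge\mu\|x_t-x^\star\|^2$. Taking conditional and then total expectations, with $e_t=\E\|x_t-x^\star\|^2$,
\[
e_{t+1}\le\Bigl(1-\frac{2}{t+1}\Bigr)e_t+\frac{M_c}{\mu^2(t+1)^2}.
\]

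\textbf{Step 3: induction.} Let $Q=\max\{e_1,M_c/\mu^2\}$, where $e_1=\|x_1-x^\star\|^2$ since $x_1$ is deterministic. We claim $e_t\le Q/t$. The base case $t=1$ holds by definition of $Q$. If $e_t\le Q/t$, then
\[
e_{t+1}\le\frac{(t-1)Q}{t(t+1)}+\frac{Q}{(t+1)^2}\le\frac{Q}{t+1},
\]
because $\frac{t-1}{t}+\frac1{t+1}\le1$, which is equivalent to $t\le t+1$. For $t=1$ the contraction factor $1-2/(t+1)$ equals $0$, so the inequality is still valid there. Taking $t=T+1$ gives \eqref{eq:localized-distance}. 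For \eqref{eq:localized-function}, $L$-smoothness together with $\nabla f(x^\star)=0$ gives $f(x)-f(x^\star)\le\frac L2\|x-x^\star\|^2$ on $\mathcal X$, and we take expectations.

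\textbf{Main obstacle.} The step that needs the most care is the rigorous conditioning in Step 1. Corollary~\ref{cor:vector-compressor} is stated for a fixed Gaussian mean, but here the mean $\nabla f(x_t)$ is random, depending on the past. We need $x_t$ to be independent of the round-$t$ seed and noise, and we must apply the corollary pointwise for each fixed value of $x_t$. This is exactly the seed-splitting convention of Appendix~\ref{app:vector}. A secondary check is integrability of $\|x_t-x^\star\|^2$ for every $t$, which follows inductively from Step 2 since each $e_t$ is finite.
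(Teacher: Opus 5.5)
Your proposal is correct and follows the paper's proof essentially line for line. The projected one-step expansion, combined with unbiasedness and strong monotonicity, gives $e_{t+1}\le\frac{t-1}{t+1}e_t+\frac{M_c}{\mu^2(t+1)^2}$; induction yields $e_t\le Q/t$; and $L$-smoothness with $\nabla f(x^\star)=0$ converts this into the function-value bound. Your filtration and integrability remarks usefully make explicit what the paper leaves implicit, as does your reliance on $x_1\in\mathcal X$, which is needed at $t=1$ to invoke the localization and strong-convexity hypotheses.
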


\begin{proof}
Let $a_t=\E\|x_t-x^\star\|^2$.  Projection nonexpansiveness, compressor unbiasedness, and strong monotonicity give
\begin{align*}
a_{t+1}
&\le a_t-2\eta_t\E\langle x_t-x^\star,\nabla f(x_t)\rangle
+\eta_t^2\E\|\cQ_B(g_t)\|^2\\
&\le(1-2\mu\eta_t)a_t+M_c\eta_t^2\\
&=\frac{t-1}{t+1}a_t+\frac{M_c}{\mu^2(t+1)^2}.
\end{align*}
Set $C=\max\{a_1,M_c/\mu^2\}$.  A direct induction gives $a_t\le C/t$.  Since $\nabla f(x^\star)=0$ and $f$ is $L$-smooth,
\[
f(x)-f(x^\star)\le\frac L2\|x-x^\star\|^2.
\]
This proves both claims.
\end{proof}

The assumption controls only the Gaussian mean.  It permits arbitrarily large stochastic-gradient realizations and is therefore distinct from an almost-sure dynamic-range bound.  Removing all mean localization additionally requires locating an arbitrarily translated optimum.

\section{Numerical Evaluation and Figure Generation}
\label{app:numerics}

For $c\ge0$,
\[
F_c(z)=\tfrac12\Phi(z-c)+\tfrac12\Phi(z+c),
\qquad
\Lambda_c=2\int_0^\infty\sqrt{F_c(z)(1-F_c(z))}\,\dd z.
\]
The script \texttt{scripts/compute\_constants.py} uses arbitrary-precision quadrature.  It returns
\[
\Lambda_{1/2}=1.79387486539778157772392605329\ldots,
\]
\[
\Lambda_{1/2}^2=3.21798703270590897323621989582\ldots.
\]
The script \texttt{scripts/generate\_figures.py} evaluates the one-dimensional risk integrals, the parametric survival curve, and the finite-grid constants used in Figure~\ref{fig:diagnostics}.  The clean code archive also contains \texttt{reproduce\_all.sh}, reusable scalar-code modules under \texttt{src/}, and tests for the constants, probability rounding, and finite-grid calculations.  Repository and supplementary-archive details are given in the Reproducibility Statement.  Numerical calculations illustrate the theory but are not used in any proof.

\section{Resource Accounting and Scope Summary}
\label{app:resources}

\begin{table}[H]
\centering
\caption{Summary of communication resources, implementation guarantees, and scope.}
\begin{tabular}{>{\raggedright\arraybackslash}p{0.29\textwidth}>{\raggedright\arraybackslash}p{0.62\textwidth}}
\toprule
Aspect & Guarantee or limitation\\
\midrule
Gaussian observations & Comparisons are defined for every real realization; there is no overflow message or clipping event.\\
Scale information & The only scale is the model/design parameter $\sigma$; no sample-dependent real is transmitted.\\
Coordinate allocation & Selected coordinates and their order are public.\\
Message length & Every optimization round sends exactly $B$ bits, including deterministic padding for $B>d$.\\
Causality and fresh randomness & Round-$t$ public coins are first used after $x_t$ is chosen.\\
Exact public randomness & Infinite public support is necessary for exact full-line unbiasedness with finite decoder means (Proposition~\ref{prop:finite-impossible}).\\
Finite implementation & Theorem~\ref{thm:finite-coin} quantifies shared random bits, bias, second moment, output magnitude, and stored-level precision; Corollary~\ref{cor:finite-quadratic} gives the end-to-end optimization bound.\\
Output tails & Attaining the minimax value forces the order-three phase transition; Corollary~\ref{cor:robustified} restores all positive-order absolute moments with arbitrarily small second-moment slack.\\
High-probability guarantee & Median-of-means uses only finite variance and gives Corollary~\ref{cor:quadratic-hp}.\\
Uniform-source precedent & Explicitly cited and recovered as Corollary~\ref{cor:uniform}; novelty is not claimed for that special case.\\
Atoms and bounded support & The fixed-source infimum uses the distributional transform; interval-unbiased codes attain it, while full-line codes approach it by a full-support perturbation.\\
Scalar and vector guarantees & Exact constants are scalar.  The induced optimization scheme is rate-optimal on Kim's continuous quadratic hard family; no joint-vector constant optimality is claimed.\\
Optimization scope & Exact lower-bound matching is for Kim's continuous quadratic hard family; broader SGD needs mean localization.\\
\bottomrule
\end{tabular}
\end{table}